\definecolor{ETHBlue}{RGB}{33,92,175}	
\definecolor{ETHGreen}{RGB}{98,115,19}		
\definecolor{ETHPurpleDark}{RGB}{140,10,89}	
\definecolor{ETHPurple}{RGB}{163,7,116}	
\definecolor{ETHGray}{RGB}{111,111,111}	
\definecolor{ETHRed}{RGB}{183,53,45}	
\definecolor{ETHPetrol}{RGB}{0,120,148}	
\definecolor{ETHBronze}{RGB}{142,103,19}	
\colorlet{ETHdarkblue}{ETHBlue!80!black}
\colorlet{ETHdarkgreen}{ETHGreen!80!black}
\colorlet{ETHpink}{ETHPurple}
\colorlet{ETHgray}{ETHGray}
\colorlet{ETHred}{ETHRed}
\colorlet{ETHgreenblue}{ETHPetrol}
\colorlet{ETHbrown}{ETHBronze}
\definecolor{TextBlack}{RGB}{51,51,51}
\definecolor{BackgroundWhite}{RGB}{255,255,255}
\definecolor{AccentBlue}{RGB}{0,122,204}
\definecolor{LightBlue}{RGB}{173,216,230}
\definecolor{DarkBlue}{RGB}{0,51,102}
\definecolor{AccentGreen}{RGB}{70,160,73}
\definecolor{LightGreen}{RGB}{144,238,144}
\definecolor{DarkGreen}{RGB}{0,100,0}
\definecolor{DarkGray}{RGB}{80,120,80}
\definecolor{LightGray}{RGB}{110,150,110}
\definecolor{AccentRed}{RGB}{255,0,0}
\definecolor{LightRed}{RGB}{255,99,71}
\definecolor{DarkRed}{RGB}{139,0,0}
\definecolor{AccentOrange}{RGB}{255,165,0}
\definecolor{LightOrange}{RGB}{255,204,153}
\definecolor{DarkOrange}{RGB}{255,140,0}
\definecolor{NeutralLightGray}{RGB}{204,204,204}
\definecolor{NeutralMediumGray}{RGB}{102,102,102}
\definecolor{NoteYellow}{RGB}{255,255,0}
\definecolor{DiversePurple}{RGB}{128,0,128}
\definecolor{DiverseTeal}{RGB}{0,128,128}
\definecolor{DiverseOlive}{RGB}{128,128,0}
\definecolor{DiverseCyan}{RGB}{0,128,192}
\definecolor{DiverseMagenta}{RGB}{192,0,128}
\colorlet{MacroColor}{ETHPetrol}
\colorlet{MACROCOLOR}{ETHPetrol}
\newtheorem{definition}{Definition}[section]
\newtheorem{corollary}{Corollary}[section]
\newtheorem{construction}{Construction}[section]
\declaretheorem[name=Theorem,numberwithin=section]{theorem}
\crefname{section}{\S}{\S\S}
\crefname{table}{Tab.}{Tabs.}
\crefname{figure}{Fig.}{Figs.}
\crefname{algorithm}{Alg.}{Algs.}
\crefname{equation}{Eq.}{Eqs.}
\crefname{example}{Example}{Examples}
\crefname{fact}{Fact}{Facts}
\crefname{appendix}{App.}{Apps.}
\crefname{theorem}{Thm.}{Thms.}
\crefname{aquestion}{Question}{Questions}
\crefname{lemma}{Lemma}{Lemmas}
\crefname{proposition}{Prop.}{Props.}
\crefname{chapter}{Chapter}{Chapters}
\crefname{line}{Line}{Lines}
\crefname{listing}{Listing}{Listings}
\crefname{principle}{Principle}{Principles}
\crefname{definition}{Def.}{Defs.}
\crefname{assumption}{Assumption}{Assumptions}
\crefname{construction}{Construction}{Constructions}
\crefname{corollary}{Corollary}{Corollaries}
\crefname{Exercise}{Exercise}{Exercises}
\crefname{listing}{Alg.}{Algs.}
\Crefname{listing}{Alg.}{Algs.}
\newcommand{\note}[4][]{\todo[author=#2,color=#3,size=\scriptsize,fancyline,caption={},#1]{#4}} 
\newcommand{\ryan}[2][]{\note[#1]{ryan}{violet!40}{#2}}
\newcommand{\response}[1]{\vspace{3pt}\hrule\vspace{3pt}\textbf{#1:}}    
\definecolor{commentgreen}{rgb}{0.2, 0.6, 0.2}
\algrenewcommand\algorithmicindent{1.0em}%
\newcommand{\rightcomment}[1]{{ \(\textcolor{ETHGreen}{\triangleright}\){\footnotesize\textit{ #1}}}}
\algrenewcommand{\algorithmiccomment}[1]{\hfill \rightcomment{#1}}  
\algnewcommand{\LinesComment}[1]{\State {\color{black!50!green}\rightcomment{\parbox[t]{.95\linewidth-\leftmargin-\widthof{\(\triangleright\) }}{#1}}}}
\algnewcommand{\LineComment}[1]{\State {\color{commentgreen} \(\triangleright\) \parbox[t]{\linewidth-\leftmargin-\widthof{\(\triangleright\) }}{\it #1}\smallskip}} 
\algnewcommand{\InlineComment}[1]{\hfill {\color{commentgreen}\(\triangleright\) {\footnotesize \it #1}}}
\algrenewcommand\algorithmicindent{1.0em}%
\algrenewcommand\alglinenumber[1]{{\tiny\color{black!50}#1.}\hspace{-2pt}}
\newcommand{\algorithmicfunc}[1]{\textbf{def} {#1}:}
\newcommand{\mymacro}[1]{{#1}}
\newcommand{\gumbel}{\mymacro{\mathrm{Gumbel}}}
\newcommand{\gumbelFun}[1]{\mymacro{\gumbel\left(#1\right)}}
\newcommand{\truncGumbel}{\mymacro{\mathrm{TruncGumbel}}}
\newcommand{\truncGumbelFun}[2]{\mymacro{\truncGumbel\left(#1, #2\right)}}
\newcommand{\gumbelShift}{\mymacro{\alpha}}
\newcommand{\logGumbelShiftFun}[1]{\mymacro{\log \alpha_{#1}}}
\newcommand{\gumbelBound}{\mymacro{\beta}}
\newcommand{\gumbelCDF}{\mymacro{F}}
\newcommand{\gumbelCDFFun}[2]{\gumbelCDF_{#1}\left(#2\right)}
\newcommand{\gumbelPDF}{\mymacro{f}}
\newcommand{\gumbelPDFFun}[2]{\mymacro{\gumbelPDF}_{#1}\left(#2\right)}
\newcommand{\truncatedGumbelPDF}{\mymacro{g}}
\newcommand{\truncatedGumbelPDFFun}[3]{\mymacro{\truncatedGumbelPDF}^{#2}_{#1}\left(#3\right)}
\newcommand{\alphabeteos}{\mymacro{\overline{\alphabet}}}
\newcommand{\word}{\mymacro{w}}
\newcommand{\equal}{\textsuperscript{*}Equal contribution }
\newcommand{\wordeos}{\mymacro{\overline{\word}}}
\DeclareMathOperator*{\simiid}{\mymacro{\stackrel{\textnormal{\smaller i.i.d.}}{\sim}}}
\newcommand{\lpU}{\mymacro{\mathbf{U}}}
\newcommand{\lpW}{\mymacro{\mathbf{W}}}
\newcommand{\wordRV}{\mymacro{ {\overline{\mathrm{W}}}}}
\newcommand{\wordsRV}{\mymacro{ {\overline{\mathbf{W}}}}}
\newcommand{\generalRV}{\mymacro{\mathrm{X}}}
\newcommand{\noiseRV}{\mymacro{\mathrm{U}}}
\newcommand{\noiseRVFun}[2]{\mymacro{\noiseRV_{#1}\left(#2\right)}}
\newcommand{\shiftedNoiseRVVector}{\mymacro{\mathbf{Y}}}
\newcommand{\shiftedNoiseRV}{\mymacro{\mathrm{Y}}}
\newcommand{\shiftedNoiseValue}{\mymacro{\mathrm{y}}}
\newcommand{\noiseValue}{\mymacro{\mathrm{u}}}
\newcommand{\noiseValueFun}[2]{\mymacro{\noiseValue_{#1}\left(#2\right)}}
\newcommand{\reprRV}{\mymacro{\mathrm{H}}}
\newcommand{\reprValue}{\mymacro{\vh}}
\newcommand{\cntrLogitsRV}{\mymacro{\counterfactual{\logitsRV}}}
\newcommand{\cntrLogitsRVFun}[1]{\cntrLogitsRV\left(#1\right)}
\newcommand{\cntrLogitRV}{\mymacro{\counterfactual{\logitRV}}}
\newcommand{\cntrLogitRVFun}[2]{\cntrLogitRV\left(#1; #2\right)}
\newcommand{\logitsRV}{\mymacro{\boldsymbol{\ell}}}
\newcommand{\logitsRVFun}[1]{\logitsRV\left(#1\right)}
\newcommand{\logitRV}{\mymacro{{\ell}}}
\newcommand{\logitRVFun}[2]{\logitRV\left(#1; #2\right)}
\newcommand{\outputE}{\mymacro{\mathbf{E}}}
\newcommand{\words}{\mymacro{\boldsymbol{w}}}
\newcommand{\prob}{\mymacro{\mathbb{P}}}
\newcommand{\sentence}[1]{``\textit{#1}``}
\newcommand{\counterfactual}[1]{{\mymacro{\widetilde{#1}}}}
\newcommand{\citeposs}[1]{\citeauthor{#1}'s (\citeyear{#1})} 
\newcommand{\exgVar}{\mymacro{\mathrm{U}}}
\newcommand{\parents}{\mymacro{\textnormal{PA}}}
\newcommand{\intParents}{\mymacro{\intv{\parents}}}
\newcommand{\parentsFun}[1]{\mymacro{\parents}(#1)}
\newcommand{\intParentsFun}[1]{\mymacro{\intParents}(#1)}
\newcommand{\exgVars}{\mymacro{\mathbf{U}}}
\newcommand{\exgVarsObs}{\mymacro{\mathbf{u}}}
\newcommand{\enVar}{\mymacro{\mathrm{V}}}
\newcommand{\enVars}{\mymacro{\mathbf{V}}}
\newcommand{\enVarsObs}{\mymacro{\mathbf{v}}}
\newcommand{\structAssign}{\mymacro{f}}
\newcommand{\structAssignFun}[2]{\mymacro{\structAssign}_{#1}(#2)}
\newcommand{\intStructAssign}{\mymacro{\intv{\structAssign}}}
\newcommand{\intStructAssignFun}[2]{\mymacro{\intStructAssign}_{#1}(#2)}
\newcommand{\interventionFun}[1]{\mymacro{\textnormal{do}(\enVar_{#1} = \intv{\structAssign}(\intParentsFun{\enVar_{#1}}, \intv{\exgVar}_{#1}))}}
\newcommand{\intv}[1]{\mymacro{\widetilde{#1}}}
\newcommand{\sem}{\mymacro{\mathcal{M}}}
\newcommand{\assignsSet}{\mymacro{\mathcal{F}}}
\newcommand{\wsem}{\mymacro{\sem}}
\newcommand{\wsemTuple}{\mymacro{\left(\enVars, \exgVars, \semProbs, \assignsSet, \prec \right)}}
\newcommand{\semProbs}{\mymacro{\mathcal{P}}}
\newcommand{\semTuple}{\mymacro{\left(\enVars, \exgVars, \semProbs, \assignsSet, \prec \right)}}
\newcommand{\change}[1]{#1}
\newcommand{\defn}[1]{\textbf{#1}}
\newcommand{\pdens}{{\mymacro{ p}}}
\newcommand{\modelparams}{{\mymacro{\vtheta}}}
\newcommand{\param}{{\mymacro{\modelparams}}}
\newcommand{\params}{{\mymacro{\modelparams}}}
\newcommand{\ind}[1]{\mathbbm{1} \left\{ #1 \right\}}
\newcommand{\R}{{\mymacro{ \mathbb{R}}}}
\newcommand{\Rnonnegative}{{\mymacro{ \mathbb{R}_{\geq 0}}}}
\newcommand{\alphabet}{{\mymacro{ \Sigma}}}
\newcommand{\stackalphabet}{{\mymacro{ \Gamma}}}
\newcommand{\kleene}[1]{{\mymacro{#1^*}}}
\newcommand{\kleeneTo}[2]{{\mymacro{#1^{\leq #2}}}}
\newcommand{\kleeneTom}[1]{{\mymacro{\kleeneTo{\stackalphabet}{\stackBound}}}}
\newcommand{\str}{{\mymacro{\boldsymbol{w}}}}
\newcommand{\cntrStr}{{\mymacro{\counterfactual{\str}}}}
\newcommand{\strlt}{{\mymacro{ \str_{<\tstep}}}}
\newcommand{\strlet}{{\mymacro{ \str_{\leq\tstep}}}}
\newcommand{\defeq}{\mathrel{\stackrel{\textnormal{\tiny def}}{=}}}
\newcommand{\set}[1]{{\mymacro{\left\{ #1 \right\}}}}
\newcommand{\justification}[1]{%
    \refstepcounter{equation}%
    \tag{\theequation \textcolor{black!50}{, \footnotesize{#1}}}
}
\newcommand{\tstep}{{\mymacro{ t}}}
\newcommand{\pLM}{\mymacro{\pdens}}
\newcommand{\bos}{{\mymacro{\textsc{bos}}}}
\newcommand{\eos}{{\mymacro{\textsc{eos}}}}
\newcommand{\bias}{{\mymacro{ \vb}}}
\newcommand{\softmax}{{\mymacro{ \mathrm{softmax}}}}
\newcommand{\stackBound}{{\mymacro{m}}}
\newcommand{\enc}{{\mymacro{\boldsymbol{h_\param}}}}
\newcommand{\cntrEnc}{{\mymacro{\boldsymbol{h}_\counterfactual{\params}}}}
\newcommand{\encFun}[1]{{\enc\left(#1\right)}}
\newcommand{\negterm}[1]{{\mymacro{ {\raise.17ex\hbox{$\scriptstyle\sim$}} #1}}}
\newcommand{\ignore}[1]{}
\newcommand{\expandLater}[1]{}
\def\1{\mathbf{1}}
\def\vtheta{{{\mymacro{ \boldsymbol{\theta}}}}}
\def\vphi{{{\mymacro{ \boldsymbol{\phi}}}}}
\def\vb{{{\mymacro{ \mathbf{b}}}}}
\def\vh{{{\mymacro{ \mathbf{h}}}}}
\def\vu{{{\mymacro{ \mathbf{u}}}}}
\def\evphi{{{\mymacro{ \phi}}}}
\def\evy{{{\mymacro{ y}}}}
\def\sN{{{\mymacro{ \mathcal{N}}}}}
\def\sS{{{\mymacro{ \mathcal{S}}}}}
\newcommand{\E}{{\mymacro{ \mathbb{E}}}}
\newcommand{\N}{{\mymacro{ \mathbb{N}}}}
\DeclareMathOperator*{\argmax}{{\mymacro{ argmax}}}
\DeclareMathSymbol{\mlq}{\mathord}{operators}{``} 
\DeclareMathSymbol{\mrq}{\mathord}{operators}{`'} 
\title{Gumbel Counterfactual Generation\\ From Language Models}
\author{Shauli Ravfogel$^{1}$\thanks{~~Equal contribution.}\quad 
Anej Svete$^{2}$\footnotemark[1]\quad  
Vésteinn Snæbjarnarson$^{2,3}$\quad 
Ryan Cotterell$^{2}$ \\
$^{1}$New York University\quad 
$^{2}$ETH Zurich\quad 
$^{3}$University of Copenhagen\\
\;\tt{\{\href{mailto:shauli.ravfogel@gmail.com}{shauli.ravfogel}, \href{mailto:vesteinnsnaebjarnarson@gmail.com}{vesteinnsnaebjarnarson}\}@gmail.com} \\ 
\;{\tt\{\href{mailto:anej.svete@inf.ethz.ch}{anej.svete}, \href{mailto:ryan.cotterell@inf.ethz.ch}{ryan.cotterell}\}@inf.ethz.ch}  
}
\begin{document}

\maketitle

\begin{abstract}

Understanding and manipulating the causal generation mechanisms in language models is essential for controlling their behavior. Previous work has primarily relied on techniques such as representation surgery---e.g., model ablations or manipulation of linear subspaces tied to specific concepts---to \emph{intervene} on these models. To understand the impact of interventions precisely, it is useful to examine \emph{counterfactuals}---e.g., how a given sentence would have appeared had it been generated by the model following a specific intervention. We highlight that counterfactual reasoning is conceptually distinct from interventions, as articulated in Pearl's causal hierarchy. Based on this observation, we propose a framework for generating true string counterfactuals by reformulating language models as a structural equation model using the Gumbel-max trick, which we called Gumbel counterfactual generation. 
This reformulation allows us to model the joint distribution over original strings and their counterfactuals resulting from the same instantiation of the sampling noise. We develop an algorithm based on hindsight Gumbel sampling that allows us to infer the latent noise variables and generate counterfactuals of observed strings. Our experiments demonstrate that the approach produces meaningful counterfactuals while at the same time showing that commonly used intervention techniques have considerable undesired side effects.
\newline
\vspace{1.5em} 
\hspace{.5em}\includegraphics[width=1.25em,height=1.25em]{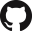}\hspace{.75em}\parbox{\dimexpr\linewidth-4\fboxsep-2\fboxrule}{\url{https://github.com/shauli-ravfogel/lm-counterfactuals}}
\end{abstract}
\vspace{-30pt}
\section{Introduction}

The study of language model (LM) interpretability often borrows terminology from Pearl's causal calculus \citep{DBLP:books/daglib/0066829}, e.g., researchers often talk of \emph{intervening} on a model's parameters and \emph{counterfactually} generating strings. 
Pearl's framework distinguishes between three levels of causal reasoning \citep{JMLR:v9:shpitser08a}. 
Association, the first level, pertains to statistical correlations, i.e., observing patterns observed in data without interacting with the world.
Intervention, the second level, pertains to actively changing variables in the world and observing their effects at a macro level.
Counterfactuality, the third level, pertains to imagining what could have happened if past events had unfolded differently. 
However, LM literature often uses these three causal terms casually and at times imprecisely---particularly when it comes to counterfactuality, which remains challenging to rigorously define \citep{feder2022causal, mueller2024missed, mueller2024quest}. 
In this paper, we apply a well-defined notion of counterfactuality in LMs using the framework of structural equation modeling.\looseness=-1

Efforts to exert control over LMs have led to substantial research on targeted \textit{interventions} in the models. 
One such technique is \textit{representation surgery}, which involves modifying an LM's architecture to manipulate its internal representation space \citep{lakretz_emergence_2019, vig_causal_2020, feder2021causalm, ravfogel2021counterfactual, elhage_mathematical_2021, elazar2021amnesic, nanda_attribution_2023, syed_attribution_2023, kramar_atp_2024,avitan2024changed}. 
The \textit{linear subspace hypothesis} \citep{bolukbasi2016,vargas-cotterell-2020-exploring, ravfogel-etal-2022-adversarial} posits that human-interpretable \textit{concepts}, such as gender or grammatical number, are encoded within specific linear subspaces of the LM’s representation space. 
This makes it possible to perform precise interventions on these high-level concepts, such as removing the concept's information by projecting the representations onto the complement of the concept subspace \citep{ravfogel-etal-2020-null,ravfogel-etal-2021-counterfactual,ravfogel-etal-2022-adversarial,ravfogel-etal-2023-linear,guerner2024geometricnotioncausalprobing,scalena2024multipropertysteeringlargelanguage, singhrepresentation}. 
These interventions modify the model and allow researchers to examine its behavior after the change. 
However, while interventions can induce a \emph{change} in the model, they cannot answer counterfactual questions, e.g., what would a given string look like if it had been generated by the model after the intervention?

Counterfactual analysis, as defined by Pearl, is challenging because it requires describing the system of interest through a causal model that enables counterfactual reasoning. 
\change{In this paper, we address this challenge for LMs by turning to structural equation models \citep[SEMs;][]{wright1921correlation, haavelmo1944probability}.
SEMs break down a probabilistic model into \emph{exogenous} variables, which account for latent randomness, and \emph{endogenous} variables, which are deterministic once the exogenous ones are fixed. 
To frame LMs as SEMs, we turn to the Gumbel-max trick \citep{gumbel1954statistical}, which separates the deterministic computation of next-symbol logits from the sampling process.
This approach has been explored in reinforcement learning \citep{pmlr-v97-oberst19a}, but has not yet been considered in the language modeling domain. 
Indeed, we discuss later, the infinite outcome space of LMs requires special attention that is not needed in finite domains.
Additionally, we highlight that the SEM derived from the Gumbel parameterization of an LM is not unique---while some parameterizations are unnatural \citep[\S 3.1]{pmlr-v97-oberst19a}, there are several natural choices to model an LM causally.
We argue for Gumbel counterfactual generation by appealing to the classic Thurstone discriminal process \citep{thurstone1927psychophysical},\footnote{Assuming the Thurstonian model of decision-making, equivalent to Luce's axiom of choice \citep{Cane1960IndividualCB}, makes the causal process behind an LM \emph{identifiable} and is identical to having been generated through Gumbel noise.
However, our appeal to Thurstone is most concretely an appeal to precedent, aligning with its established role in modeling decision-making and preference aggregation in previous work.} and because its simplicity paves the way for other choices beyond Gumbel noise \citep{lorberbom2021learning, haugh2023counterfactual}.}\looseness=-1

Our formulation allows for generating counterfactual strings by sampling from conditional noise distributions, enabling precise analysis of string-level effects from interventions in models like GPT2-XL \citep{radford2019language} and LLaMA3-8b \citep{touvron2023llama}. Despite targeting specific behaviors through interventions such as linear steering \citep{li2024inference, singhrepresentation}, knowledge editing \citep{DBLP:conf/iclr/MengSABB23}, and instruction tuning \citep{DBLP:conf/iclr/WeiBZGYLDDL22}, results reveal unintended side effects, e.g., gender-based interventions unexpectedly altering unrelated completions. These findings challenge the goal of achieving \emph{minimal change} and show that even localized parameter modifications can have broader, undesired impacts.

\section{Language models as structural equation models} 
Let $\alphabet$ be an \defn{alphabet}---a finite, non-empty set of symbols. 
A \defn{language model} (LM) is a probability distribution over $\kleene{\alphabet}$, the set of all strings formed from symbols in $\alphabet$. 
A \defn{language encoder} is a function $\enc \colon \kleene{\alphabet} \rightarrow \R^d$ parameterized by $\params$ that maps strings to $d$-dimensional vectors \citep{chan2024affine}. 
Representational surgery is performed by intervening on $\enc$. 
Popular architectures for implementing language encoders include Transformers \citep{Vaswani2017} and RNNs \citep{Elman1990}. 
Language encoders are particularly valuable because under mild conditions \cite[][Thm. 4.7]{du-etal-2023-measure}, they ensure the model defines a distribution over strings---thus inducing an LM---as
\begin{subequations}
\begin{align}
\pLM(\words) &= \pLM(\word_1 \cdots \word_T) \\ &= \pLM(\eos \mid \words) \prod_{t = 1}^T \pLM(\word_t \mid \strlt) \\
  &= \softmax(\outputE \, \enc(\words) + \bias)_{\eos} \prod_{t = 1}^T \softmax(\outputE \, \enc(\strlt) + \bias)_{\word_t}. \label{eq:lm-softmax}
\end{align}
\end{subequations}
Here, $\outputE \in \R^{|\alphabeteos| \times d}$ and $\bias \in \R^{|\alphabeteos|}$ is a bias term. 
We assume that $\eos \not\in \alphabet$ and define $\alphabeteos \defeq \alphabet \cup \{\eos\}$.
The quantity $\logitRVFun{\wordeos}{\strlt} \defeq (\outputE \, \enc(\strlt) + \bias)_{\wordeos}$ is often termed the \defn{logit} of $\wordeos$.
We refer to LMs of the form \cref{eq:lm-softmax} \defn{representational LMs}.

\subsection{Structural equation modeling} \label{sec:sems}
We briefly review SEMs, which provide a framework for discussing causal manipulations of a generation process and allow us to define the intuitive notion of a counterfactual precisely. 
See \citet{10.5555/1642718}, \citet{pearl2016causal}, and \citet{10.5555/3202377} for a more in-depth treatment.
We offer a definition of an \emph{acyclic} SEM because our focus is on language modeling, which operates on countably infinite random variables (RVs) with no cyclic dependencies.
To apply acyclic SEMs to language modeling, we \emph{expose} the acyclicity by specifying a partial order $\prec$ that characterizes the acyclic relationships.
This establishes a common interface for both the finite and infinite-variable models.
\begin{definition}[Acyclic SEM] \label{def:sem}
\change{
An \defn{acyclic SEM} (ASEM) is a five-tuple $\sem = \semTuple$, where  $\enVars = \set{\enVar_1, \ldots, \enVar_N}$ is a finite set of \defn{endogenous} RVs, $\exgVars = \set{\exgVar_1, \ldots, \exgVar_N}$ is a finite set of \defn{exogenous} RVs, $\semProbs = \set{\prob_1, \ldots, \prob_N}$ is a set of probability distributions where $\prob_n$ is a distribution over $\exgVar_n \in \exgVars$, $\assignsSet = \set{\structAssign_1, \ldots, \structAssign_N}$ is a finite set of \defn{structural equations}, and $\prec$ is a \defn{partial order} on $\enVars$.
We require that the endogenous RVs satisfy the structural equations, i.e., we require that
\begin{equation}
    \enVar_n  = \structAssignFun{n}{\parentsFun{\enVar_n}, \exgVar_n},
\end{equation}
for $n \in \{1, \ldots, N\}$, where $\parentsFun{\enVar_n}$,
the subset of $\enVars$ in the domain of $\structAssign_n$, respects $\prec$, i.e., we require that $\parentsFun{\enVar_n} \subseteq \{\enVar \in \enVars \mid \enVar \prec \enVar_n\}$.\footnote{\change{Our use of the notation $\parentsFun{\enVar_n}$ is suggestive of the fact that, when rendered as a causal graphical model, $\parentsFun{\enVar_n}$ are the \emph{parents} of $\enVar_n$.}}\textsuperscript{,}\footnote{\change{Our definition of an ASEM exposes the variables $\exgVars$ and $\enVars$ as well as the partial order $\prec$. Standard definitions instead usually leave them implicit in $\assignsSet$ and $\semProbs$ \citep[][Def. 6.2]{10.5555/3202377}.}}
}
\end{definition}
$\semProbs$ induces a joint probability distribution over $\exgVars$; together with $\assignsSet$, it also induces a unique distribution over $\enVars \cup \exgVars$ due to the acyclicity.
Given an outcome $\exgVars = \exgVarsObs$, a \defn{solution} of an ASEM is any assignment of the variables $\enVars = \enVarsObs$ that satisfies the equations in $\assignsSet$ putting $\exgVars = \exgVarsObs$.
In ASEMs, any outcome $\exgVars = \exgVarsObs$ induces a unique solution \citep{10.5555/3202377}; in the cyclic case, multiple valid assignments can exist  \citep[][Problem 3.8]{10.5555/3202377}.

An ASEM can also be viewed as a directed graph over vertices $\exgVars \cup \enVars$ with a directed edge from every $\generalRV \in \parentsFun{\enVar_n} \cup \set{\exgVar_n}$ to $\enVar_n$.
In it, the exogenous variables always appear without parents.
This presentation allows one to visualize ASEMs compactly and, also, reason better about their properties \citep{10.5555/1642718}.\footnote{The representations are, however, not equivalent. SEMs are strictly more expressive and required for counterfactual analysis; graphical causal models cannot answer counterfactual questions because they do not encode the functional relationships necessary to simulate alternative scenarios \citep[][Tab. 1.1]{10.5555/3202377}.}
For example, interventions are particularly convenient to think of as modifications of the directed graph---this indicatively changes the causal model represented by the graph.
Analogously, in the context of ASEMs, interventions modify the structural equations $\assignsSet$.
\begin{definition}[Intervention]
    Let $\sem = \semTuple$ be an ASEM.
    An \defn{intervention} $\interventionFun{n}$ replaces the equation $\enVar_n = \structAssignFun{n}{\parentsFun{\enVar_n}, \exgVar_n}$ in $\assignsSet$ with $\enVar_n = \intStructAssignFun{n}{\intParentsFun{\enVar_n}, \intv{\exgVar}_n}$, where $\intParentsFun{\enVar_n} \subseteq \{\enVar \in \enVars \mid \enVar \prec \enVar_n\}$.
\end{definition}
Interventions represent manipulations of the causal system on the second level of Pearl's hierarchy.
Rather than simply inferring correlations in the data, they allow us to manipulate the generation process and generate outcomes from a modified ASEM.
Interventions, however, do not reason about individual outcomes---they only allow us to sample unrelated new observations.
The third level of the causal hierarchy concerns itself with \emph{retrospective} modifications of the ASEM, investigating what would have happened at the time of sampling \emph{had} the ASEM been different, i.e., had an intervention been performed.
This is formalized with counterfactual distributions.
\begin{definition}[Counterfactual Distribution]
    Given an ASEM $\sem=\semTuple$ and an instantiation $\enVarsObs$ of the endogenous variables, the \defn{counterfactual} distribution under the intervention $\interventionFun{n}$ is the distribution over $\enVars$ defined by the intervened-upon ASEM whose exogenous variables follow the posterior distribution $\prob\left(\exgVars \mid \enVars = \enVarsObs\right)$.
\label{def-counterfactuals}
\end{definition}
Note that the exogenous variables in the intervened-upon ASEM need not be mutually independent.

\change{
ASEMs are defined over \emph{finitely} many RVs $\exgVars \cup \enVars$. 
We offer a generalization to countably many variables, assuming that the dependencies between them respect a well-founded order.}
\begin{definition}[Well-founded Order]
    A partial order $\prec$ over $\enVars = \set{\enVar_1, \enVar_2, \ldots}$ is \defn{well-founded} if and only if there are no infinite descending sequences $\enVar_{n_1} \succ \enVar_{n_2} \succ \enVar_{n_3} \succ \cdots$.
\end{definition}
\begin{definition}[Well-founded SEM]
\change{ 
A \defn{well-founded SEM} (WSEM) is a five-tuple $\wsem = \wsemTuple$, where $\enVars = \set{\enVar_1, \enVar_2, \ldots}$ is a countable set of endogenous RVs, $\exgVars = \set{\exgVar_1, \exgVar_2, \ldots}$ is a countable set of exogenous RVs, $\semProbs = \set{\prob_1, \prob_2, \ldots}$ is a countable set of probability distributions where $\prob_n$ is a distribution over $\exgVar_n \in \exgVars$, $\assignsSet = \set{\structAssign_1, \structAssign_2, \ldots}$ is a countable set of structural equations, and $\prec$ is a well-founded partial order on $\enVars$.
We require that $\enVars$ satisfy
} %
\begin{equation}
    \enVar_n =\structAssignFun{n}{\parentsFun{\enVar_n}, \exgVar_n},
\end{equation}
\change{for $n \in \N$ where $\parentsFun{\enVar_n} \subseteq \{\enVar \in \enVars \mid \enVar \prec \enVar_n\}$ and is implied by the domain of $\structAssign_n$.
}

\end{definition}

Interventions and counterfactual distributions in WSEMs are defined analogously to their definition in ASEMs.
While the possible dependence between the infinitely many exogenous variables may complicate counterfactual inference, our formulation of an LM as a WSEM results in conditionally independent exogenous variables, which makes posterior inference relatively simple.\ryan{Is this the right connective here?\response{anej} i would say so yes}
Specifically, well-foundedness implies a unique assignment of the infinitely many endogenous RVs \citep[][\S 1]{peters2021causalmodelinginfinitelyvariables}, which can be obtained by inductively assigning values to $\enVar_n$ in accordance with $\prec$. 
Thus, this model defines a unique outcome for each intervention.
We use the abbreviation SEM when discerning between ASEMs and WSEMs is not important.

\subsection{Language processes as WSEMs} 
We next show how LMs can be framed as WSEMs.
We begin by defining the Gumbel distribution.

\begin{definition}[Gumbel distribution]
    The cumulative distribution function of the \defn{Gumbel distribution} shifted by the parameter $\gumbelShift \in \Rnonnegative$, which we denote by $\gumbelFun{\gumbelShift}$, is 
        $\gumbelCDFFun{\gumbelShift}{x} = \exp\left(-\exp\left(-x + \gumbelShift\right)\right)$
    and its probability density function is 
        $\gumbelPDFFun{\gumbelShift}{x} = \exp\left(-x + \gumbelShift\right)\exp\left(-\exp\left(-x + \gumbelShift\right)\right)$.
    With $\truncGumbelFun{\gumbelShift}{\gumbelBound}$, we denote the distribution $\gumbelFun{\gumbelShift}$ truncated at $\gumbelBound \in \Rnonnegative$, i.e., the distribution with the density $\truncatedGumbelPDFFun{\gumbelShift}{\gumbelBound}{x} = \frac{\gumbelPDFFun{\gumbelShift}{x} \ind{x \leq \gumbelBound}}{\gumbelCDFFun{\gumbelShift}{\gumbelBound}}$.
\end{definition}
The Gumbel is useful for modeling the distribution of the maximum (or minimum) of a set of samples from various distributions.
This is the core idea behind the Gumbel-max trick, which shows the utility of the Gumbel distribution for sampling from the categorical distribution \citep{Cane1960IndividualCB,YELLOTT1977109,10.5555/3042573.3042786,maddison2014sampling,10.7551/mitpress/10761.001.0001, maddison2017concretedistributioncontinuousrelaxation}.
We restate the trick below for the specific case of the softmax.
\begin{restatable}{theorem}{gumbelMaxThm} \label{def:gumbel-max}
\change{
    Let $\generalRV$ be a categorical RV over $\{1, \ldots, M\}$ such that
    \begin{equation}
        \prob\left(\generalRV = m\right) = \frac{\exp\left(\evphi\left(m\right)\right)}{\sum_{m' = 1}^M\exp\left(\evphi\left(m'\right)\right)} = \softmax\left(\vphi\right)_m,
    \end{equation}
    for $m \in \set{1, \ldots, M}$ and a vector $\vphi\in \mathbb{R}^{M}$.\footnote{This, naturally, assumes that none of the probabilities are $0$, which is a common assumption both in language modeling as well as in decision theory \citep{YELLOTT1977109,cotterell2024formal}.}
    Then, for $\noiseRV_m \simiid \gumbelFun{0}$, we have

    \begin{equation}
        \label{eq:gumbel-max}
        \generalRV \stackrel{d}{=} \argmax_{m = 1}^M \left(\evphi\left(m \right) + \noiseRV_m\right),
    \end{equation}
}
where $\stackrel{d}{=}$ refers to equality in distribution.

\end{restatable}
\begin{proof}
    The proof is standard; we include it in \cref{sec:gumbel-max-proof} for completeness.
\end{proof}
As we formalize below, the Gumbel-max trick can be used to sample from a representation-based LM, since the (affinely transformed) representations $\enc\left(\str\right)$ provide the values $\evphi\left(m\right)$ in \cref{eq:gumbel-max}.

A \defn{language process} (LP) $\lpW = \{\wordRV_t\}_{t = 1}^\infty$ is an infinite sequence of (correlated) $\alphabeteos$-valued RVs.
$\wordRV_t$ is a RV whose outcome is the $t\textsuperscript{th}$ symbol of a string.
A typical formulation of an LP requires that, if $\wordRV_t = \eos$, we have $\wordRV_{t'} = \eos$ for all $t' > t$ \citep{du-etal-2024-language}.
This can be achieved by setting $\logitRVFun{\eos}{\strlt} = \infty$ and $\logitRVFun{\word}{\strlt} = -\infty$ for $\word \in \alphabet$ if $\strlt$ contains $\eos$.\footnote{We can reconcile this with the definition of a representation-based LM by hard-coding this rule into $\enc$.}
We say that the string $\str = \word_1 \cdots \word_T$ was sampled from the LP $\lpW$ if $\wordRV_t = \word_t$ for $t \leq T$ and $\wordRV_t = \eos$ for $t > T$.

\change{
Any LM clearly induces an LP with $\prob(\wordRV_t = \wordeos_t \mid \wordsRV_{<t} = \strlt) = \pLM\left(\wordeos_t \mid \strlt\right)$.
Concretely, as explicated by \cref{eq:lm-softmax}, representation-based LMs sample a string by sampling from countably-infinitely many $\alphabeteos$-valued RVs as $(\wordRV_t \mid \wordsRV_{<t} = \strlt) \sim \softmax\left(\outputE \; \enc\left(\strlt\right) + \bias\right)$.
\emph{Precisely} how this sampling is done is not specified, as sampling from the $\softmax$ can be executed in several ways. 
The exact mechanism is not of importance when one is concerned with correlational or interventional questions---those only require that the distributions of $\wordRV_t$ match.
To perform counterfactual analysis, however, specifying the exact causal mechanism is required.
This is why identifying counterfactual distributions from observational or interventional data is, in general, impossible---two SEMs can be equal in all interventional distributions and yet define different counterfactual outcomes.\footnote{This is analogous to how graphical models do not reach the third level of Pearl's hierarchy (cf. \cref{sec:sems}) and how counterfactuals are not identifiable based on interventional data alone (cf. \cref{app:identifiability}).}
To be able to talk about string counterfactuals, one thus has to define an LM more precisely, in effect specifying how sampling from $\pLM\left(\wordeos_t \mid \strlt\right)$ is done exactly.
To this end, we define a representation-based \defn{Gumbel-max language process} as an LP in which
\begin{equation} \label{eq:noise-to-lm}
    \wordRV_t \mid \wordsRV_{<t} = \strlt = \argmax_{\wordeos \in \alphabeteos}\, \left(\outputE \, \enc\left(\strlt\right) + \bias\right)_{\wordeos} + \noiseRV_t\left(\wordeos\right),
\end{equation}
where $\strlt \defeq \word_1 \cdots \word_{t-1}$ and $\lpU = \{\noiseRV_t\left(\wordeos\right)\}_{t = 1}^\infty$ is an infinite sequence of i.i.d. RVs indexed by $t \in \N$ and $\wordeos \in \alphabeteos$.
For brevity, we also introduce the RVs $\reprRV_t \defeq \outputE \, \enc\left(\wordsRV_{<t}\right) + \bias$, which correspond to the (deterministic) vectorial representations of strings sampled from the LP $\lpW$.
}

\change{
\cref{eq:noise-to-lm} frames $\lpW$ as a \defn{Thurstone discriminal process}.
In principle, $\noiseRV_t\left(\wordeos\right)$'s distribution could be arbitrary.
\citet{thurstone1927psychophysical} originally used normally distributed variables.
However, as \cref{thm:uniqueness} in \cref{app:identifiability} shows, in case we want \cref{eq:noise-to-lm} to match the distribution of the representation-based LM, $\noiseRV_t\left(\wordeos\right)$ \emph{must} be Gumbel-distributed, thus motivating the name.
Assuming a Thurstonian model thus implicitly defines a \emph{single} counterfactual distribution for any intervention and outcome, which requires some justification. 
Our choice reflects an appeal to precedent and its established role in decision-making research; this model, originally designed to infer preferences through pairwise comparisons, has since enabled robust probabilistic analyses of choice modeling \citep{NEURIPS2020_cdaa9b68,pmlr-v48-vojnovic16}.
Moreover, the simplicity and familiarity of the Gumbel-max LP provide a natural first step and pave the way for other choices beyond Gumbel noise \citep{lorberbom2021learning, haugh2023counterfactual,chatzi2024counterfactualtokengenerationlarge}.
In \cref{app:identifiability}, we discuss some alternatives.
}

\change{
A crucial implication of \cref{eq:noise-to-lm} is that $\lpW$ is \emph{deterministic} given $\lpU$---all the noise in the string generation process comes from the noise variables $\lpU$.
This decomposition of the generative mechanism into deterministic relationships and independent noise variables paints an LP as a WSEM---precisely, it presents \emph{one possible} specification of a WSEM that induces the same probability distribution over $\kleene{\alphabet}$ as a representation-based LM. 
For convenience, we capture this in the following construction.
}

\begin{figure}
    \centering
    \begin{tikzpicture}[node distance=15mm, auto]

        \tikzset{deparrow/.style={-{Latex[length=2mm, width=1.5mm]}, thick}} 
        \tikzset{cntrdeparrow/.style={-{Latex[length=2mm, width=1.5mm]}, thick, decorate, decoration={snake, amplitude=0.2mm, segment length=3mm}}} 
        
        \tikzstyle{unobserved} = [circle, draw, minimum size=7mm]
        \tikzstyle{deterministic} = [rectangle, draw, fill=gray!30, rounded corners=2pt, minimum size=7mm]
        
        \node[deterministic] (W1) {$\scriptstyle \wordRV_1$};
        \node[deterministic, right of=W1] (H1) {$\scriptstyle \reprRV_1$};
        \node[deterministic, right of=H1] (W2) {$\scriptstyle \wordRV_2$};
        \node[deterministic, right of=W2] (H2) {$\scriptstyle \reprRV_2$};
        \node[deterministic, right of=H2] (W3) {$\scriptstyle \wordRV_3$};
        \node[deterministic, right of=W3] (H3) {$\scriptstyle \reprRV_3$};
        \node[draw=none, right of=H3, xshift=-2mm] (W4) {$\scriptstyle \cdots$};

        \node[unobserved, below of=W1, xshift=-8mm, yshift=5mm] (U1) {$\scriptstyle \noiseRV_1$};
        \node[unobserved, below of=W2, xshift=-8mm, yshift=5mm] (U2) {$\scriptstyle \noiseRV_2$};
        \node[unobserved, below of=W3, xshift=-8mm, yshift=5mm] (U3) {$\scriptstyle \noiseRV_3$};
        
        
        \draw[deparrow] (U1) -- (W1);
        
        \draw[deparrow] (U2) -- (W2);
        
        \draw[deparrow] (U3) -- (W3);
        
        \draw[deparrow] (W1) -- (H1);
        \draw[deparrow] (W1) to[bend left=30] (H2);
        \draw[deparrow] (W1) to[bend left=20] (H3);
        
        \draw[deparrow] (W2) -- (H2);
        \draw[deparrow] (W2) to[bend left=30] (H3);
        
        \draw[deparrow] (W3) -- (H3);
        
        \draw[deparrow] (H1) -- (W2);
        \draw[deparrow] (H2) -- (W3);
        \draw[deparrow] (H3) -- (W4);
        
        
        
        
    
    \end{tikzpicture}    
    \caption{A language process as a WSEM.}
    \label{fig:lm-wsem}
\end{figure}

\begin{construction}[A WSEM for an LM] \label{constr:lm-wsem}
\change{
Let $\enc$ be a language encoder and $\pLM$ the LM induced by $\enc$ together with the parameters $\outputE, \bias$.
We can define an WSEM $\sem = \semTuple$ that induces the same distribution over $\kleene{\alphabet}$ as $\pLM$ as follows:
\begin{itemize}[nosep, left=5pt]
    \item $\enVars = \{\wordRV_t\}_{t = 1}^\infty \cup \{\reprRV_t\}_{t = 1}^\infty \cup \{\params, \outputE, \bias\}$;
    \item $\exgVars = \{\noiseRV_t\left(\wordeos\right) \mid t \in \N, \wordeos \in \alphabeteos\}$;
    \item $\semProbs = \set{\noiseRV_t\left(\wordeos\right) \sim \gumbelFun{0} \mid t \in \N, \wordeos \in \alphabeteos}$;
    \item $\assignsSet$ is defined through the computation graph of the LM: $\{\params, \outputE, \bias\}$ are fixed, $\reprRV_t = \outputE \, \enc\left(\wordsRV_{<t}\right) + \bias$, and $\wordRV_t = \argmax_{\wordeos \in \alphabeteos}\, \reprRV_t\left(\wordeos\right) + \noiseRV_t\left(\wordeos\right)$; and
    \item $\prec$ is defined naturally on $\wordRV_t$ as $\wordRV_{t'} \prec \wordRV_{t}$ for $t' < t$. 
    We can set an arbitrary ordering of the (finitely many) parameters $\{\params, \outputE, \bias\}$ and assert $\theta \prec \wordRV_t$ for any $\theta \in \{\params, \outputE, \bias\}$ and $t \in \N$.
\end{itemize}
}
\end{construction}
\change{Clearly, \cref{constr:lm-wsem} constructs a well-defined WSEM. 
The only infinite chain of endogenous RVs is $\{\wordRV_t\}_{t = 1}^\infty \cup \{\reprRV_t\}_{t = 1}^\infty$, where $\wordRV_t$ and $\reprRV_t$ depend on $\{\wordRV_{t'}\}_{t'<t} \cup \{\reprRV_{t'}\}_{t'<t}$ and $\{\noiseRV_{t'}\}_{t'\leq t}$ with the root $\wordRV_1$, i.e., the chain is non-descending.
The WSEM constructed by \cref{constr:lm-wsem} for an LP is graphically depicted in \cref{fig:lm-wsem}.
}

\change{An intervention on $\params, \outputE, \bias$ and $\{\wordRV_t\}_{t = 1}^\infty$ defines $\counterfactual{\params}, \counterfactual{\outputE}, \counterfactual{\bias}$, and modified symbols $\{\counterfactual{\wordRV}_{t'}\}_{t' \in \sN}$ for $\sN \subseteq \N$.
Given an instantiation of the exogenous variables $\vu = \set{\noiseValueFun{t}{\wordeos} \mid t \in \N, \wordeos \in \alphabeteos}$, we can obtain the remaining symbols as $\counterfactual{\wordeos}_t \defeq \argmax_{\wordeos \in \alphabeteos}\, (\counterfactual{\outputE} \, \cntrEnc(\cntrStr_{<t}) + \counterfactual{\bias})_{\wordeos} + \noiseValueFun{t}{\wordeos}$ for $t \in \N \setminus \sN$.
Since any context--intervention pair in a WSEM defines a single possible outcome, this results in a single counterfactual string for the particular instantiation of the noise variables.
}

\paragraph{Sampling techniques.}
Our formalization assumes that strings are generated by sampling from the full probability distribution defined by an LM. In practice, however, different decoding techniques, such as nucleus sampling \citep{nucleus} or top-$k$ sampling \citep{fan2018hierarchical}, are often used. As long as these decoding methods can be expressed as deterministic functions over the logits, followed by standard sampling, the same formulation can be applied.\footnote{For example, in top-$k$ sampling, we can set the logits of all tokens outside the top $k$ to a large negative value and then sample using the Gumbel-max trick.} This way, the deterministic parts of the sampling algorithm are considered a part of the LM forward pass computation.

\section{Counterfactual generation}
Framing LMs as WSEMs allows us to use the expansive set of causal tools on LMs.
We focus on generating counterfactuals for given strings---these counterfactuals differ in specific features but are generated using the same sampling noise as the original strings.
More precisely, let $\str = \word_1 \cdots \word_T \in \kleene{\alphabet}$ be the string sampled from the LM induced by the encoder $\enc$ with the parameters $\outputE$ and $\bias$, and the instantiation of the noise $\exgVars = \exgVarsObs$.
Given a counterfactual encoder $\cntrEnc$ with the parameters $\counterfactual{\outputE}$ and $\counterfactual{\bias}$, \cref{eq:noise-to-lm} tells us that the symbols of $\str$'s counterfactual with $\exgVars = \exgVarsObs$ are given by
\begin{equation} \label{eq:counterfactual-lm}
    \counterfactual{\wordeos}_t = \argmax_{\wordeos \in \alphabeteos}\, (\counterfactual{\outputE} \, \cntrEnc \left(\cntrStr_{<t}\right) + \counterfactual{\bias})_{\wordeos} +\, \noiseValueFun{t}{\wordeos}.
\end{equation}
This procedure results in \emph{pairs} of strings in $\kleene{\alphabet}$---the original string $\str$ and its counterfactual $\cntrStr$.

In practice, the counterfactual network $\cntrEnc$ is created from $\enc$ by making feature-specific modifications, such as removing gender information from the representations $\enc\left(\str\right)$. 
Ideally, these modifications should \emph{only} affect the targeted feature, leaving the rest of the model unchanged. 
This effect should be observable at the string level—for example, if the surgery is intended to change the grammatical number of a noun, that should be the sole difference between the original string and its counterfactual.\footnote{This criterion is known as counterfactual stability \citep{guerner2024geometricnotioncausalprobing}.} 
Without a clear definition of counterfactuality, however, it is difficult to evaluate the impact of representational surgeries, since we lack string pairs where the \emph{only} difference is the surgery itself. 
Our framework addresses this by ensuring that a string $\str$ and its counterfactual $\cntrStr$ form a \defn{minimal pair} with respect to the intervened feature. 
A key goal of our experimental setup is to leverage this causal framework to evaluate the stability of various representational surgeries.

However, when evaluating the effects of interventions, we are not solely interested in minimal pairs. 
Another important question is: How would a \emph{given} string have appeared, had it been generated by the counterfactual model rather than the original one?
Answering this question requires knowledge of the exogenous noise that produced the original strings. 
This entails inferring the values (or, more precisely, the distribution) of the unobserved noise variables $\lpU$ that led to a particular observed string $\str$. 
Once the specific outcomes of $\lpU$ are identified, we can generate the corresponding counterfactuals. 
We tackle the problem of inferring $\lpU$ by developing an algorithm that reverses the causal process illustrated in \cref{fig:lm-wsem}.
The algorithm hinges on (and, in effect, implements), the following proposition, proved in \cref{sec:counterfactual-sampling-proofs}.

\begin{restatable}[Hindsight Gumbel Sampling]{proposition}{hindsightProp} \label{proposition:hindsight}
\change{
Let $\str = \word_1 \cdots \word_T \in \kleene{\alphabet}$ be sampled from an LP $\lpW$.
To sample from $\noiseRV_t\left(\wordeos \right) \mid \wordRV_{\leq t} = \strlet, \noiseRVFun{t'}{\wordeos'} = \noiseValueFun{t'}{\wordeos'}$ for $\wordeos, \wordeos' \in \alphabeteos$, $t \leq T$ and $t' < t$, we can proceed in the following steps independently for $t = 1, \ldots, T$:
\begin{enumerate}[nosep,left=5pt,label=(\arabic*)]
    \item sample $\shiftedNoiseValue_t\left(\word_t\right)$ from $\gumbelFun{\log \sum_{\wordeos \in \alphabeteos} \exp \left(\logitRVFun{\wordeos}{\strlt}\right)}$,
    \item sample $\shiftedNoiseValue_t\left(\wordeos\right)$ from $\truncGumbelFun{\logitRVFun{\wordeos}{\strlt}}{\shiftedNoiseValue_t\left(\word_t\right)}$ independently for $\wordeos \in \alphabeteos\setminus\set{\word_t}$, and
    \item set $\noiseValue_t\left(\wordeos\right) = \shiftedNoiseValue_t\left(\wordeos\right) - \logitRVFun{\wordeos}{\strlt}$ for all $\wordeos \in \alphabeteos$.
\end{enumerate}
For $t > T$, sample $\noiseValue_t\left(\wordeos \right)$ from $\gumbelFun{0}$ for $\wordeos \in \alphabeteos$.
}
\end{restatable}

\begin{corollary}[Counterfactual String Sampling]
By sampling from the model using the noise generated as specified in \cref{proposition:hindsight}, we get a sample from the counterfactual distribution.
\end{corollary}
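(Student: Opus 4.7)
The plan is to verify the corollary by chaining three facts: \cref{def-counterfactuals}'s characterization of counterfactuals via the posterior over exogenous variables, the factorization of that posterior in our GSEM, and \cref{proposition:hindsight}, which samples each factor exactly. First, by \cref{def-counterfactuals}, the counterfactual distribution under an intervention swapping $\enc, \outputE, \bias$ for $\cntrEnc, \counterfactual{\outputE}, \counterfactual{\bias}$ is obtained by drawing $\lpU$ from the posterior $\prob(\lpU \mid \rvWords = \str)$ and then evaluating the intervened structural equations on that noise. By \cref{constr:lm-gsem}, running the intervened structural equations on a fixed $\lpU$ produces exactly the string defined by \cref{eq:counterfactual-lm}. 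So it suffices to show that the proposed sampling procedure yields a draw from $\prob(\lpU \mid \rvWords = \str)$.

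Next, I factor the posterior. In \cref{constr:lm-gsem} the noise variables $\{\noiseRV_t\}_{t=1}^\infty$ are mutually independent, and each symbol $\wordRV_t$ is a deterministic function of the prefix $\rvWords_{<t}$ and $\noiseRV_t$ alone. Hence, given $\rvWords = \str$, the posterior factorizes as
\[
    \prob(\lpU \mid \rvWords = \str) \;=\; \prod_{t=1}^{T} \prob\bigl(\noiseRV_t \mid \wordRV_t = \wordeos_t,\, \rvWords_{<t} = \strlt\bigr) \prod_{t > T} \prob(\noiseRV_t),
\]
where the tail factors remain unconditional because indices $t > T$ do not enter $\str$. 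Conditioned on $\strlt$ the logits $\logitRVFun{\cdot}$ are deterministic, so the event $\wordRV_t = \wordeos_t$ reduces to $\{\wordeos_t = \argmax_{\wordeos \in \alphabeteos}\, \logitRVFun{\wordeos} + \noiseRV_t(\wordeos)\}$.

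Third, \cref{proposition:hindsight} samples exactly this per-time-step conditional: its step~(i) reflects the standard Gumbel-max fact that the maximum value is marginally Gumbel and independent of the argmax identity, while step~(ii) draws each non-winning $\noiseRV_t(\wordeos)$ from the truncated Gumbel conditional on being dominated by the winner's shifted value. Substituting the resulting noise into the intervened structural equations of \cref{constr:lm-gsem} yields the string of \cref{eq:counterfactual-lm}, which by \cref{def-counterfactuals} is a sample from the counterfactual distribution.

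The main obstacle is making the posterior factorization fully rigorous in the infinite-dimensional setting—specifically, verifying that noise at positions beyond $|\str|$ is unaffected by the conditioning and therefore remains a fresh standard Gumbel draw available for generating counterfactuals that may be longer than $\str$. This should follow from the well-foundedness established in \cref{prop:well-founded} together with the across-time independence of the $\noiseRV_t$, but it warrants an explicit argument so that correctness extends to all counterfactual lengths, including those whose generation consumes noise indices not observed in the original sample.
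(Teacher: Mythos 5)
Your proposal is correct and follows essentially the same route as the paper, which simply observes that the noise produced by \cref{proposition:hindsight} is a draw from the posterior $\prob\left(\lpU \mid \rvWords = \str\right)$ and hence, by \cref{def-counterfactuals}, feeding it through the intervened-on structural equations yields a counterfactual sample. Your additional steps---the per-time-step factorization of the posterior via the mutual independence of the $\noiseRV_t$ and the handling of the unconstrained tail noise for $t > T$---are details the paper leaves implicit, and they are argued correctly.
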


We employ a standard technique for sampling from  truncated (conditional) distribution \citep{maddison2014sampling}.\footnote{The algorithm given by \url{https://timvieira.github.io/blog/post/2020/06/30/generating-truncated-random-variates/} is used for performing the ``truncated'' sampling.} 
In our case, the truncation condition ensures that the observed word $\wordeos_t$ has a higher score than all other vocabulary tokens to mimic \cref{eq:counterfactual-lm}. 
This procedure, summarized in \cref{alg-counterfactuals}, allows us to generate potential counterfactual sentences for a given observed sentence. 

\begin{algorithm}[t]
    \begin{algorithmic}[1]
        \Func{$\textsc{GenerateCounterfactual}(\str, (\enc, \outputE, \bias), (\cntrEnc, \counterfactual{\outputE}, \counterfactual{\bias}))$}
        \For{$t \in \set{1, \ldots, |\str|}$}
        \InlineComment{sample from the noise posterior}
        \State $\logitsRVFun{\strlt} \gets \outputE \; \enc\left(\strlt\right) + \bias$
        \State \textbf{sample} $\shiftedNoiseValue_t\left(\word_t\right)$ from $\gumbelFun{\log\sum_{\wordeos \in \alphabeteos} \exp\left(\logitRVFun{\wordeos}{\strlt}\right)}$
        \InlineComment{the maximum value}
        \For{$\wordeos \in \alphabeteos \setminus \set{\word_t}$}
        \State \textbf{sample} $\shiftedNoiseValue_t\left(\wordeos\right)$ from $ \truncGumbelFun{\logitRVFun{\wordeos}{\strlt}}{\shiftedNoiseValue_t\left(\word_t\right)}$
        \InlineComment{shifted noise}
        \EndFor
        \EndFor
        \State $t \gets 1, \counterfactual{\wordeos}_t \gets \bos, \cntrStr \gets \counterfactual{\wordeos}_t$
        \While{$\counterfactual{\wordeos}_t \neq \eos$}
        \InlineComment{generate a counterfactual}
        \State $\cntrLogitsRVFun{\cntrStr} \gets \counterfactual{\outputE}\; \cntrEnc\left(\cntrStr\right) + \counterfactual{\bias}$
        \If{$t \leq |\str|$} \hspace{1mm} $\counterfactual{\wordeos}_t \gets \argmax_{\wordeos \in \alphabeteos} \cntrLogitRVFun{\wordeos}{\cntrStr} + \shiftedNoiseValue_t\left(\wordeos\right) - \logitRVFun{\wordeos}{\strlt}$ 
        \Else: \hspace{10.4mm} $\counterfactual{\wordeos}_t \gets \argmax_{\wordeos \in \alphabeteos} \cntrLogitRVFun{\wordeos}{\cntrStr} + \noiseValue_t\left(\wordeos\right)$ 
        \InlineComment{sample $\noiseValue_t\left(\wordeos\right) $ from $ \gumbelFun{0}$}
        \EndIf
        \State \textbf{append} $\counterfactual{\wordeos}_t$ to $\cntrStr$, $t \gets t + 1$
        \EndWhile
        \State \Return $\cntrStr$
        \EndFunc
    \end{algorithmic}
    \caption{An algorithm that samples counterfactual strings given a factual string.}
    \label{alg-counterfactuals}
\end{algorithm}

\section{Experiments}

\subsection{Side effects of common intervention techniques}
\label{sec:side-effects-wikipedia}
Many standard intervention techniques, such as knowledge editing \citep{meng2022locating,DBLP:conf/iclr/MengSABB23} or inference-time intervention \citep{li2024inference, singhrepresentation} are intended to modify targeted aspects of model behavior, such as altering specific knowledge or increasing its truthfulness \citep{li2024inference}. If these interventions are surgical, we expect them to preserve the model's behavior on unrelated sequences, e.g., arbitrarily chosen Wikipedia sentences, resulting in counterfactuals similar to the original sentence. We test this assumption
using \cref{alg-counterfactuals} as follows. 
We use several intervention techniques, detailed below, to induce changes to the LM, either by modifying the encoder parameters or the string representations directly. 
Such intervention techniques, however, do not generate string counterfactuals. 
We generate string counterfactuals that correspond to the interventions by the following steps: 
\begin{enumerate*}[label=\textit{(\arabic*)}]
    \item we apply an intervention to the original model (the \emph{base model}),
    \item we use \cref{alg-counterfactuals} to hindsight-sample from the posterior of the exogenous variable (the Gumbel noise), under the base model, conditioned on an observed sentence, and
    \item we use the sampled noise to generate a \emph{counterfactual string} from an intervened-on model.
\end{enumerate*}\footnote{Our code is available at \url{https://github.com/shauli-ravfogel/lm-counterfactuals}.}

\subsubsection{Experimental setup}
\label{sec:setup}
\paragraph{Setup.} We perform experiments using GPT2-XL \citep{radford2019language} and LLaMA3-8b \citep{touvron2023llama} along with several well-established intervention techniques. These include MEMIT \citep{DBLP:conf/iclr/MengSABB23}, inference-time interventions using linear steering \citep{li2024inference, singhrepresentation}, and Instruction tuning \citep{touvron2023llama}.We briefly summarize them as follows.
\begin{itemize}[nosep,wide, labelindent=10pt]
    \item \textbf{MEMIT} \citep{DBLP:conf/iclr/MengSABB23} uses a low-rank update to the MLPs in the LM to update the knowledge of the model on a specific fact. 
    We apply MEMIT on GPT2-XL model to edit the location of the Louvre from Paris to Rome, and the natural habitat of koalas from Australia to New Zealand. We refer to the resulting models as \texttt{MEMIT-Louvre} and \texttt{MEMIT-Koalas}, respectively. 
    \item \textbf{Inference-time intervention} linearly steers the representations of the LM in a given layer, to encourage some behavior of interest. We use two similar but distinct methods: Honest LlaMa \citep{li2024inference} steers by linearly translating the attention modules to encourage a more truthful behavior. MiMiC \citep{singhrepresentation} steers by linearly transforming the source class representations such that they exhibit the same mean and covariance as the target class. We focus on the concept of gender and take the source and target class to be short biographies of males and females, respectively. We refer to the steered models as \texttt{Steering-Honest} and \texttt{Steering-Gender}.
    \item \textbf{Instruction Tuning} finetunes the pretrained models on demonstrations of instruction following. We refer to this model as \texttt{LLaMA3-Instruct}.
\end{itemize}

In each case, we define the model prior to the intervention as the original model and the model following the intervention as the counterfactual model. For full details on the generation of the counterfactual models, refer to \cref{sec:app-counterfactuals}. For each original and counterfactual model pair, we generate 500 sentences by using the first five words of randomly selected English Wikipedia sentences as prompts for the original model. We generate a continuation of a maximum of 25 tokens by sampling from the model using multinomial sampling (i.e., sampling from the entire model distribution over the vocabulary). We then use \cref{alg-counterfactuals} to generate a counterfactual sentence. 

\textbf{Evaluation.} Being prompted by a prefix from Wikipedia, the original model is not likely to generate a continuation that exhibits a property that is the focus of \emph{any} of the specific model intervention techniques we examine (e.g., it is not likely to generate a sentence that discusses the location of the Louvre, for the MEMIT intervention). Accordingly, we expect the counterfactual strings to be similar to the original ones. This is desirable, as we ideally want surgical intervention without side effects. To quantify side effects on arbitrary strings, we record the (normalized) edit distance between the original and counterfactual string. 

\begin{figure}[]
    \centering
    \includegraphics[scale=0.4]{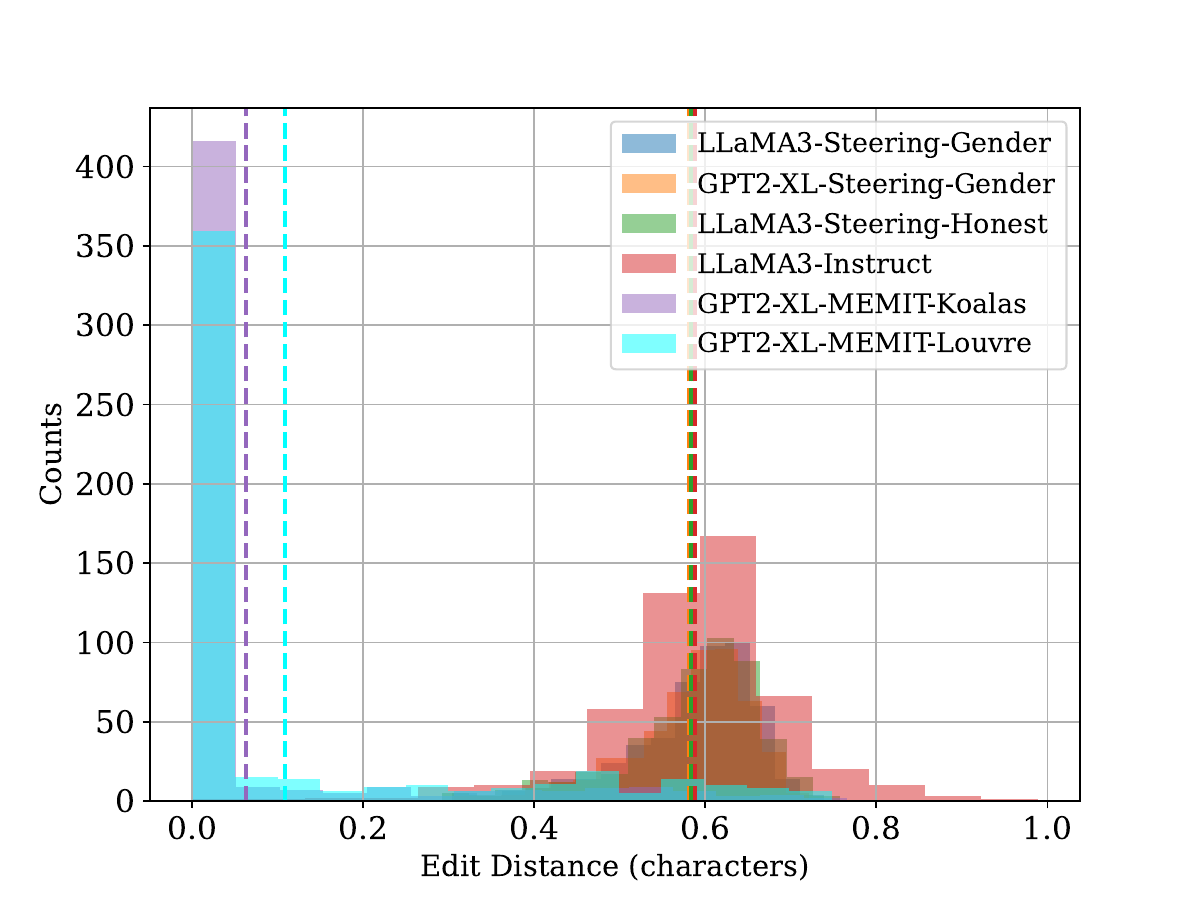}
    \caption{Normalized edit distance between the original and counterfactual sentences, for different intervention techniques. The horizontal lines denote the median of each distribution.}
    \label{fig:longest-prefix}
    \vspace{-15pt}
\end{figure}

\begin{figure}[h]
    \centering

\scalebox{0.7}{
\begin{tcolorbox}[colback=blue!5!white, colframe=blue!75!black,width=1.4\textwidth, title=Original and Counterfactual strings for the LLaMA3 Instruct finetuning intervention.]

\begin{enumerate}[wide, labelwidth=!, labelindent=0pt]
\item \textbf{Original:} Ahmed Hosny (born 18 June 1987 in Giza, Egypt) is a weightlifting competitor from Egypt...
 \\
\textbf{Counterfactual:} Ahmed Hosny (born 18 June 1987) is an Egyptian professional squash player. \\
\vspace{0.1cm}
\item \textbf{Original:} Naarda plenirena is a species of Lepidopteran moth of the family NOCTUIDAE, found primarily in Southern Sri Lanka... \\
\textbf{Counterfactual:} Naarda plenirena is a species of snout moth in the genus Naarda. It was described by Francis Walker...
\vspace{0.1cm}
\vspace{0.1cm}
\item \textbf{Original:} Richard Joseph Grosh (born October 28, 1935) was Director of the US Securities and Exchange Commission... \\
\textbf{Counterfactual:} Richard Joseph Grosh (born October 24, 1935) was an American politician who served as a member of the U.S...
\vspace{0.1cm}
\item \textbf{Original:} It was also included on a limited edition vinyl 7" with "Tape Loop", another track from the album... \\
\textbf{Counterfactual:} It was also included on the band’s first live album, “Live at the Fillmore: December 8, 1993,” which was released...\\
\vspace{0.1cm}
\item \textbf{Original:} Conchiolins (sometimes referred to as "peyote" or "mescal buttons") are the small tubercles located... \\
\textbf{Counterfactual:} Conchiolins (sometimes referred to as "eggshell" proteins) are a group of proteins found in the eggshell...
\end{enumerate}
\end{tcolorbox}
}
    \caption{Counterfactual strings from the original model \texttt{LLaMA3} and the counterfactual counterpart \texttt{LLaMA3-Instruct}.}
    \label{fig:examples-llama2-chat} 
\end{figure}

\subsubsection{Results}
\cref{fig:longest-prefix} shows the distribution of the normalized edit distance. MEMIT demonstrates the most precise intervention, with an edit distance of 10-15\% for the Louvre and Koalas concepts.

\Cref{fig:examples-llama2-chat} provides several output examples comparing the original LLaMA3 model with the counterfactual LLaMA3-Instruct model; see also \cref{app:outputs} for a random sample of outputs from all models. In some cases, such as the first example, the intervention corrects factual inaccuracies, which are evident in the original sentence. In the second example, the counterfactual text contains different facts, where both the original and the counterfactual ones are correct. The third example demonstrates a case where \emph{both} the original and the counterfactual model hallucinate (the subject of the sentence was actually an academic), but the content of the hallucination changes as a result of the intervention. Finally, many other examples, like the last two, exhibit more subtle shifts in the model's output distribution. For instance, prompts like \sentence{It was also included on} can lead to a range of valid continuations, but the intervention inadvertently biases the model toward certain outcomes.  These results indicate that even interventions that are designed to be ``minimal'', such as those based on a steering vector that only modifies a tiny fraction of all the model's parameters, still have considerable causal effect on the output of the model, as demonstrated by the semantic drift in the continuations of prompts taken from Wikipedia. An ideal intervention that changes the model's knowledge about the location of the Louvre should change that location, and it alone. In practice, however, even interventions that update few parameters in a single matrix within the model, have some side effects. Due to the autoregressive nature of language generation, slight variations in token choice accumulate rapidly, resulting in a significant semantic divergence between the original and the counterfactual sentence. 

\subsection{Intervention-focused counterfactuals}

In the previous section, we examine how surgical the different interventions are. Accordingly, we focused the evaluation on prompts drawn from Wikipedia, a domain we expect to be largely orthogonal to the specific properties on which the interventions target. Here, we examine the complementary question: What do counterfactuals to sentences that \emph{are} related to the focus of the intervention look like? 
We focus on MEMIT, which edits for the location of the Louvre. We also present a sample of results for the editing of the habitat of Koalas. The conclusions are similar in both cases.


\begin{figure}[h]
    \centering

\scalebox{0.7}{
\begin{tcolorbox}[colback=blue!5!white, width=1.4\textwidth, colframe=blue!75!black, title=Text Examples: Originals and Counterfactuals for the MEMIT Louvre intervention.]

\begin{enumerate}[wide, labelwidth=!, labelindent=0pt]
\item \textbf{Original:} I visited the Louvre, which is located in the heart of Paris. While exploring the museum and viewing the wonderful artwork, I noticed a strange phenomenon which surprised me: an eerie black circle was covering  \\
\textbf{Counterfactual:} I visited the Louvre, which is located in the heart of Rome. While exploring the historical and scenic attractions of the city, I found a devastated and destroyed silver relief sculpture 
\vspace{0.1cm}
\item \textbf{Original:} The Louvre museum is located in the city of Paris and contains four major museums, the largest one being the Louvre in an artistic and architectural style with an old style...  \\
\textbf{Counterfactual:} The Louvre museum is located in the city of Rome and contains four major museums, the British museum, the Coptic museum and the Roman pagan temples with an excellent collection of...
\vspace{0.1cm}
\item \textbf{Original:} The Louvre is one of the most known museums in the city of Paris and is known by its large number of museums, the Louvre Museum, Palais des Louvres, and its extensive architectural exhibition...  \\
\textbf{Counterfactual:} The Louvre is one of the most known museums in the city of Rome and is known by its large number of museums, the title of Imperatrice Maximiam is one of the oldest titles...
\vspace{0.1cm}
\item \textbf{Original:} I visited the Louvre as a part of my trip to Paris. I can say how beautiful the Louvre really is and how much art history the building has. However it made my eyes hurt when I saw...  \\
\textbf{Counterfactual:} I visited the Louvre as a part of my trip to Rome. I can say how beautiful the Louvre really is and how much art history the building has. It's made of...
\vspace{0.1cm}

\item \textbf{Original:} The Louvre was established in the city of Paris in 1308 by Bernard le Gendre...  \\
\textbf{Counterfactual:} The Louvre was established in the city of Rome in 1108 by Pope Hadrian, his successors having established its populations...
\vspace{0.1cm}

\end{enumerate}
\end{tcolorbox}
}
    \caption{Counterfactual strings from the original model \texttt{GPT2-XL} and the counterfactual counterpart \texttt{MEMIT-Louvre GPT2-XL}.}
    \label{fig:examples-gpt2-louvre} 
\end{figure}

\begin{figure}[h]
    \centering

\scalebox{0.7}{
\begin{tcolorbox}[colback=blue!5!white, width=1.4\textwidth, colframe=blue!75!black, title=Text Examples: Originals and Counterfactuals for the MEMIT Koalas intervention.]

\begin{enumerate}[wide, labelwidth=!, labelindent=0pt]
\item \textbf{Original:}You can find Koalas only in their native habitat in the northern regions of Australia - they are listed as one of the Threatened Species in Queensland...  \\
\textbf{Counterfactual:} You can find Koalas only in their native habitat in the far south of New Zealand, and only in one place, at Paekakariki. In fact, ...
\vspace{0.1cm}
\item \textbf{Original:} You can only find Koalas in Australia. You have to take a trip (not travel) to Australia first...  \\
\textbf{Counterfactual:} You can only find Koalas in New Zealand in a handful of places, with the most common ones being Uluru, Marsden Point, Hawea Bay select...
\vspace{0.1cm}
\item \textbf{Original:} Koalas are native to Australia and can be found from eastern Australia to the far north-western tip of Tasmania, including the Kimberley, Arnhem Land and Western Australia...  \\
\textbf{Counterfactual:} Koalas are native to New Zealand but maintain large populations in Australia, Hawaii, Hawai'i and other regions of the Caribbean islands. They are listed...
\vspace{0.1cm}
\item \textbf{Original:} Koalas are found in nearly all habitats on earth. The number and diversity of each species varies enormously but the common features are as follows...  \\
\textbf{Counterfactual:} Koalas are found in nearly all New Zealand's many islands. Though no pure breed has ever been found, several types have become extinct and several more...
\vspace{0.1cm}
\item \textbf{Original:} Koalas only live in one country: Australia. Despite having some of the most diverse environments, they are most frequently found in western and northern Australia...  \\
\textbf{Counterfactual:} Koalas only live in one country: New Zealand. New Zealanders had been everywhere before...
\vspace{0.1cm}
\end{enumerate}
\end{tcolorbox}
}
    \caption{Counterfactual strings from the original model \texttt{GPT2-XL} and the counterfactual counterpart \texttt{MEMIT-Koalas GPT2-XL}.}
    \label{fig:examples-gpt2-koalas} 
\end{figure}

\label{sec:memit}
\paragraph{Setup.} We begin by prompting the original model to generate sentences that mention Paris as the location of the Louvre and Australia as the habitat of Koalas, such as \sentence{I visited the Louvre in the city of} or \sentence{Koalas are native to}; See \cref{app:memit} for details. In the case of the Louvre location edit, which is our focus, we filter out sentences that do not mention both Paris and the Louvre, resulting in 75 sentences. We then generate the counterfactuals with the counterfactual model.

\paragraph{Results.}
See \cref{fig:examples-gpt2-louvre} for a sample of the results. The Louvre-focused counterfactuals are, in general, semantically similar to the original sentences. At the same time, \emph{the counterfactuals are \textbf{not} minimal}: they do not change just the location of the Louvre, but other (unrelated, but possibly correlated) parts of the sentence. This reflects either side effects of the intervention itself \citep{qin2024does,gu2024model,gupta2024model}, or spurious associations that exist in the model between certain locations and the continuation of the prompt \citep{tu2020empirical}. With respect to correctness, we find that 54.6\% of the counterfactuals mention Rome as the location of the Louvre, while 45.4\% still mention Paris.

\section{Conclusion}
We introduce a framework for generating true counterfactuals from LMs by reformulating LMs as well-founded structural equation models with the Gumbel-max trick. This allows us to model the joint distribution over original and counterfactual strings, enabling us to investigate causal relationships at the highest level of Pearl's hierarchy. Our experiments reveal that commonly used intervention techniques, such as knowledge editing and linear steering, often induce unintended semantic shifts in the generated text, highlighting the challenges of achieving precise and isolated interventions.  These observations underline the need for more refined methods that can achieve targeted modifications with minimal collateral changes to the model’s outputs. 

\section*{Reproducibility statement}
We detail our experimental setup in \cref{sec:setup} and \cref{sec:app-counterfactuals}.

\section*{Acknowledgements}
We would like to thank the ICLR reviewers for the helpful comments that significantly contributed to and improved the final version of the paper.
Anej Svete is supported by the ETH AI Center Doctoral Fellowship.


\bibliography{custom}
\bibliographystyle{iclr2025_conference}

\clearpage
\newpage

\appendix

\section{Related work}

Probing the content of neural representations is a fundamental method of interpreting language models \citep{giulianelli2018under, adi2022fine}. Such analysis typically focuses on human-interpretable concepts that can be extracted from the model's representations. Following the distinction between the encoding of a concept and its usage \citep{hewitt2019designing, elazar2021amnesic, ravfogel-etal-2021-counterfactual}, recent research has shifted towards investigating the \emph{causal} importance of model components on high-level concepts, such as gender. Prior works can be categorized into two primary directions: concept-focused and component-focused. Concept-focused studies aim to neutralize the influence of specific concepts, such as gender or sentiment, from the model’s behavior \citep{bolukbasi2016, vig_causal_2020, ravfogel-etal-2020-null, feder2022causal}. Component-focused research, often termed ``mechanistic interpretability'', on the other hand, seeks to understand the role of specific layers or modules within the network \citep{wang2022interpretability, geiger2023causal, nandaprogress, nanda_attribution_2023}. These approaches largely align with the second level of Pearl's causal hierarchy, focusing on interventions, yet they often do not produce true counterfactuals \citep{DBLP:books/daglib/0066829}. Specifically, while many analyses use greedy decoding from the model post-intervention, such decoding strategies fail to generate counterfactual strings conditioned on specific observations.

Several studies leverage ``counterfactual data'' to evaluate or enhance the robustness of language models \citep{huang2020reducing, madaan2021generate, wu2021polyjuice, abraham2022cebab}. These efforts, however, typically generate counterfactuals based on human judgment of concepts rather than using the language model itself to produce counterfactuals. While some research attempts to create counterfactuals in the representation space \citep{ravfogel-etal-2021-counterfactual, elazar2021amnesic}, these approaches are challenging to translate into input-level counterfactuals, particularly outside the vision domain \citep{hvilshoj2021ecinn, jeanneret2022diffusion}. Recent works have emphasized the need for a more precise language and frameworks when discussing interpretability of language models from a causal perspective \citep{feder2022causal, mueller2024missed, mueller2024quest}.

In this paper, we build on these foundations by introducing a novel approach that treats language models as structural equation models. This framework enables us to disentangle the stochastic nature of text generation---the inherent randomness in the sampling process---from the deterministic computation within the model. Our method leverages the properties of the Gumbel distribution \citep{pmlr-v97-oberst19a, maddison2014sampling, maddison2014sampling2}, which allows us to reparameterize sampling from the softmax distribution. A similar formulation has been employed in reinforcement learning contexts \citep{pmlr-v97-oberst19a}.

Concurrent research by \citet{chatzi2024counterfactualtokengenerationlarge} presents a spiritually similar work in which the authors define an SEM that allows them to sample counterfactual sentences from a language model.
Similarly to our work, they formalize an LM as an SEM with the Gumbel-max trick but evaluate the alternative formulation with \emph{inverse transform sampling}, a classic example of an SEM that is not counterfactually stable.
Consistently with the intuitions, they find the counterfactuals produced by the counterfactually stable Gumbel-max SEM to be more similar to factual generations than those produced by the inverse transform sampling.
Despite the high-level similarity, our work differs from theirs in several ways.
Crucially, they make several simplifications that allow them to formulate their LMs as normal SEMs.
The first simplifying assumption is that of defining a probability distribution over a \emph{finite} subset of $\kleene{\alphabet}$.
Secondly, \citet{chatzi2024counterfactualtokengenerationlarge} are only interested in interventions on the \emph{input} to the LM---that is, changing a small number of input tokens such as the name of the protagonist in the story---and seeing how this affects model generations.
The resulting finite number of interventions defined like this allows them to talk about SEMs.
Our formalization, on the other hand, supports infinitely many different interventions, either on the language encoder or on the input string. Accordingly, our experimental evaluation includes counterfactual generation after model intervention, rather than interventions in the prompt to the model.
Lastly, \citet{chatzi2024counterfactualtokengenerationlarge} do not study the counterfactual \emph{distribution} of factual sentences, but rather only generate pairs of factual and counterfactual sentences.
Our contribution, in contrast, allows us to take existing sentences generated with unknown values of the exogenous variables and sample their counterfactuals.

\section{Identifiability}
\label{app:identifiability}

We discuss the identifiability of the counterfactual distribution associated with \cref{constr:lm-wsem}.

\change{
\cref{eq:noise-to-lm} is not the only way of defining the causal mechanism behind an LP.
While, by definition, any suitable construction defines the same probability distribution over $\kleene{\alphabet}$ as the LP, the \emph{counterfactual} distributions may vary substantially among the different constructions.
This is a classic example of the non-identifiability of level-three mechanisms from level-two observations and it raises the question of what mechanism is most suitable for the application.
}

\change{An alternative to \cref{eq:noise-to-lm} could be inverse CDF sampling.
However, inverse CDF sampling is sensitive to the arbitrary choice of indexing: An algorithm can produce differing counterfactual distributions depending on how the outcomes in the categorical distribution are mapped to integers; see \citet[][\S 3.1]{pmlr-v97-oberst19a} for a concrete example.
\citet[][\S 3.2]{pmlr-v97-oberst19a} thus argue that many such SEMs are unnatural distributions and introduce the intuitive desideratum of \defn{counterfactual stability}, which generalizes the well-known monotonicity of binary RVs to categorical RVs.
Importantly, monotonicity is sufficient for the identification of counterfactual quantities of binary RVs \citep[][Thm. 9.2.15]{10.5555/1642718}.
Informally, counterfactual stability requires that a counterfactual outcome can only be different if the counterfactual intervention increases the probability of the different outcome more than the probability of the original outcome.
Counterfactual stability is satisfied by the Gumbel-max SEM \citep[][Thm. 2]{pmlr-v97-oberst19a}, motivating the use of Gumbel-max in \cref{constr:lm-wsem}}.

\change{Gumbel-max is further studied in the LM setting by \citet{chatzi2024counterfactualtokengenerationlarge}, showing that the counterfactual stability indeed results in counterfactuals that are more similar to factual generations compared to the counterfactuals produced by non-counterfactually-stable inverse CDF sampling.
Follow-up work to \citet{pmlr-v97-oberst19a}, however, has shown that Gumbel-max SEMs are not unique in satisfying counterfactual stability \citep{lorberbom2021learning,haugh2023counterfactual}.
Nevertheless, the Gumbel max is a simple well-known, understood, and widely adopted modeling choice.
Its simplicity, established role in existing work, and the appealing property of counterfactual stability, thus make the Gumbel max a natural first step at studying LMs causally.
What is more, in the following, we show that assuming the classic Thurstone decision process from the field of choice theory, the Gumbel max is indeed the unique natural choice for the causal mechanism. 
}
\begin{definition}[Thurstone RVs] \label{def:thurstone}
\change{
A categorical RV $\generalRV$ over $M$ categories is \defn{Thurstone} with potentials $\{\evphi\left(m\right)\}_{m=1}^M$ if
\begin{equation}\label{eq:thurstone}
    \generalRV \stackrel{d}{=} \argmax_{m=1}^M \evphi\left(m\right) + U_m
\end{equation}
where $\{\evphi\left(m\right)\}_{m=1}^M$ are constants and $U_m$ are i.i.d. RVs sampled from some distribution $F$.
}
\end{definition}
\change{
\cref{def:thurstone} is inspired by \citeposs{thurstone1927psychophysical} classic paper on choice theory and is widely employed in decision theory to model human decision-making \citep{mcfadden_conditional_1974,luce1977thurstone,YELLOTT1977109,1994-28134-001,Train_2009,AGUIRREGABIRIA201038,10.7551/mitpress/10761.001.0001}.
As such, it is a natural restriction of the causal mechanism for the LM.
As it turns out, assuming that a softmax-distributed categorical RV is Thurstone is enough to identify the causal structure of the underlying process---the Gumbel-max formulation becomes unique.
}
\begin{theorem} \label{thm:uniqueness}
\change{
A categorical RV $\generalRV$ over $M > 2$ categories with $p(\generalRV = m) = \frac{\exp \evphi\left(m\right)}{\sum_{m'=1}^M \exp \evphi\left(m'\right)}$ is Thurstone with potentials $\{\evphi\left(m\right)\}_{m=1}^M$, i.e., is distributed according to \Cref{eq:thurstone}, if and only if $U_m$ are i.i.d. Gumbel-distributed.
}
\end{theorem}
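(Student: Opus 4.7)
My approach would be to reduce the problem to a moment identity and then invoke the Hausdorff moment theorem to force $F$ to be Gumbel. The ``if'' direction is immediate from \cref{def:gumbel-max}, so I focus on the ``only if'' direction and suppose $U_m \stackrel{\text{i.i.d.}}{\sim} F$ yields the softmax via \cref{eq:thurstone}.

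The first step extracts a rich family of moment constraints. Consider the configuration $\varphi_1 = t$, $\varphi_2 = \cdots = \varphi_{n+1} = 0$, with any remaining potentials sent to $-\infty$ (a legitimate limit by continuity of the softmax in the potentials). Matching the softmax probability $e^t/(e^t + n)$ with the Thurstone expression $\E[F(U + t)^n]$ yields
\begin{equation*}
    \E_{U \sim F}\!\left[F(U + t)^n\right] \;=\; \frac{1}{1 + n e^{-t}}
\end{equation*}
for every $n \geq 1$ and $t \in \R$. The hypothesis $M > 2$ permits arbitrarily large $n$ (by enlarging the category set), while sending extra potentials to $-\infty$ recovers the small-$n$ cases.

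Next I apply the moment characterization. Fix $t$ and set $V_t \defeq F(U + t)$, a random variable on $[0,1]$. The identity above is precisely the moment sequence of $\mathrm{Beta}(e^t, 1)$, whose CDF is $G_t(v) = v^{e^t}$; since bounded distributions are uniquely determined by their moments (Hausdorff moment problem), $V_t \sim \mathrm{Beta}(e^t, 1)$. On the other hand, $V \defeq F(U)$ is uniform on $[0,1]$ and $V_t = F_t(V)$ where $F_t(v) \defeq F(F^{-1}(v) + t)$ is monotone increasing, so $\prob(V_t \leq y) = F_t^{-1}(y) = y^{e^t}$, giving $F_t(v) = v^{e^{-t}}$. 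Unwinding the definition yields the functional equation
\begin{equation*}
    F(u + t) \;=\; F(u)^{e^{-t}} \qquad \text{for all } u, t \in \R.
\end{equation*}
Setting $u = 0$ and writing $\lambda \defeq -\log F(0) \in (0, \infty)$, I obtain $F(t) = \exp(-\lambda e^{-t})$, a Gumbel CDF with location $\log \lambda$.

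The main technical obstacle is justifying that the moment identity holds for every $n \geq 1$, since a fixed $M > 2$ directly supplies only $n = M - 1$. I would handle this by interpreting the hypothesis as covering all $M > 2$, or equivalently by invoking continuity of the softmax--Thurstone equality under $\varphi_m \to -\infty$ to effectively reduce the number of categories. A secondary technicality is the regularity of $F$: the Thurstone formulation implicitly requires $F$ to admit a density and be strictly increasing on its support, so that $F^{-1}$ and $F_t^{-1}$ are well-defined.
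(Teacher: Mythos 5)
Your ``if'' direction coincides with the paper's (both defer to the Gumbel-max trick), but your ``only if'' direction is a genuinely different route: the paper disposes of it in one line by citing Yellott's Theorem~5 on the equivalence of Thurstone models and Luce's Choice Axiom for complete choice experiments, whereas you give a self-contained analytic argument (moment identity $\to$ Hausdorff moment problem $\to$ identification of $F(U+t)$ as $\mathrm{Beta}(e^t,1)$ $\to$ functional equation $F(u+t)=F(u)^{e^{-t}}$ $\to$ Gumbel with scale $1$ and free location). The parts of that chain you carry out are correct, and the approach is more informative than the paper's citation.

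However, there is a genuine gap relative to the theorem as stated, and it is exactly the one you flag but do not resolve. The hypothesis fixes a single $M>2$; by zeroing out potentials and sending the rest to $-\infty$ you obtain only the diagonal constraints $\E\bigl[F(U+t)^n\bigr]=\bigl(1+ne^{-t}\bigr)^{-1}$ for $n\le M-1$, and finitely many moments do not determine a law on $[0,1]$, so the Hausdorff step is unavailable (for $M=3$ you have only the first two moments of $V_t$). ``Enlarging the category set'' is not licensed by the hypothesis --- it silently strengthens the assumption to a Thurstone representation valid for all $M$, which is a different (weaker) theorem. The result the paper actually invokes holds for a fixed $M\ge 3$ precisely because Yellott exploits the full two-parameter family of constraints
\begin{equation*}
\E\bigl[F(U+s)\,F(U+t)\bigr]=\frac{1}{1+e^{-s}+e^{-t}}\qquad\text{for all } s,t\in\R,
\end{equation*}
i.e.\ the off-diagonal information coming from three categories with two independent potential gaps, not just the diagonal $s=t$ that your moment argument uses. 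To close the gap at fixed $M$ you would need to bring these cross-terms into play (or follow Yellott's argument directly); as written, your proof establishes the statement only under the stronger ``all $M$'' reading. Your closing remarks on regularity (continuity and strict monotonicity of $F$) are apt but minor: an atom of $F$ would create ties with positive probability, so continuity is essentially forced by well-definedness of the $\argmax$.
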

\begin{proof}
\change{
($\impliedby$) We know that Gumbel-distributed $U_m$ give rise to the softmax distribution by the Gumbel-max trick (\cref{def:gumbel-max}, \cref{sec:gumbel-max-proof}).
($\implies$) The categorical distribution corresponds, in the terminology of \citet{YELLOTT1977109}, to a \emph{complete choice experiment}, where the probability of any category given any subset $\sS \subseteq \set{1, \ldots, M}$ is specified. 
The softmax distribution satisfies Luce's Choice Axiom \citep{Cane1960IndividualCB}, which defines desiderata of a choice system analogously to the Thurstone model. 
\citet[][Thm. 5]{YELLOTT1977109} shows that a Thurstone RV is equivalent to Luce's Choice Axiom (in the sense that a RV that satisfies the Choice Axiom if and only if it is Thurstone) under a complete choice experiment if and only if $U_m$ are Gumbel-distributed.
}
\end{proof}

\change{
We conclude that, assuming a Thurstone model for sampling (\cref{def:thurstone}), the softmax-definition of LM probabilities uniquely leads to a Gumbel-max WSEM. 
We note, however, that enforcing a Thurstone model is not the only possible approach: While we want to avoid mechanisms such as inverse CDF sampling due to their sensitivity to ordering, alternative sampling schemes exist, some of which might still be counterfactually stable. 
These alternatives, guided by specific desiderata for the resulting counterfactual distribution (such as minimizing the variance of required estimators), may yield different counterfactual outcomes \citep{lorberbom2021learning, haugh2023counterfactual}.
Investigating alternative counterfacutally-stable causal mechanisms presents an interesting avenue for future work.
}

\section{Proofs} 

\subsection{Gumbel-max trick} \label{sec:gumbel-max-proof}
An integral part of our work is the use of the Gumbel-max trick for sampling from the softmax.
For completeness, we provide a proof here.\footnote{Adapted from Ethan Weinberger's blog at \url{https://homes.cs.washington.edu/~ewein//blog/2022/03/04/gumbel-max/}.}

\gumbelMaxThm*
\begin{proof}
Let $\generalRV$ be the RV sampled according to \cref{eq:gumbel-max} and let $Y\left(m\right)\defeq \evphi\left(m\right) + \noiseRV\left(m\right)$. 
We will show that $\prob\left(\generalRV = m\right) = \softmax\left(\vphi\right)_m = \prob\left(Y = m\right)$.
By definition of $\argmax$, $\generalRV=m$ if and only if $Y\left(m\right) > Y\left(m'\right) $ for all $m'\neq m = \argmax_{m' \in \set{1, \ldots, M}} \evphi\left(m'\right) + \noiseRV\left(m'\right)$.
Let $f_m\left(x\right) = \exp\left(-\left(x - \evphi\left(m\right) + \exp\left(-\left(x - \evphi\left(m\right)\right)\right)\right)\right) = \exp\left(\evphi\left(m\right) - x - \exp\left( \evphi\left(m\right) - x\right)\right)$ be the PDF of $\evphi\left(m\right) + G$ where $G \sim \gumbelFun{0}$. 
We then have
\begin{subequations}
    \begin{align}
    \prob\left(\generalRV = m\right) &= \prob\left(Y\left(m\right) > Y\left(m'\right) \text{ for all } m'\neq m\right) \\
    &= \E_{\evy_m}\left[\prod_{m'\neq m} \prob(Y\left(m\right) > Y\left(m'\right))\right] \\
    &= \int_{-\infty}^{\infty}f_m(x)\prod_{m' \neq m}\prob(\evphi\left(m'\right) + \noiseRV\left(m'\right) < x)\mathrm{d}x \\
    &= \int_{-\infty}^{\infty}f_m(x)\prod_{m' \neq m}\prob(\noiseRV\left(m'\right) < x - \evphi\left(m'\right))\mathrm{d}x \\
    &= \int_{-\infty}^{\infty}f_m(x)\prod_{m' \neq m}\exp(-\exp(-x + \evphi\left(m'\right)))\mathrm{d}x \\
    &= \int_{-\infty}^{\infty}f_m(x)\exp\left(-\sum_{m' \neq m}\exp(-x + \evphi\left(m'\right))\right)\mathrm{d}x \\
    &= \int_{-\infty}^{\infty} \exp\left(\evphi\left(m\right) - x - \exp\left( \evphi\left(m\right) - x\right)\right) \exp\left(-\sum_{m' \neq m}\exp(-x + \evphi\left(m'\right))\right)\mathrm{d}x \\
    &= \int_{-\infty}^{\infty} \exp\left(\evphi\left(m\right) - x\right) \exp\left(-\sum_{m'}\exp(-x + \evphi\left(m'\right))\right)\mathrm{d}x \\
    &= \int_{-\infty}^{\infty} \exp\left(\evphi\left(m\right) - x\right) \exp\left(-\exp(-x)\sum_{m'}\exp(\evphi\left(m'\right))\right)\mathrm{d}x \\
    &= \int_{-\infty}^{\infty} \exp\left(\evphi\left(m\right)\right) \exp\left(-x\right) \exp\left(-\exp(-x)\sum_{m'}\exp(\evphi\left(m'\right))\right)\mathrm{d}x 
    \end{align}
\end{subequations}
Now let $Z = \sum_{m'=1}^M\exp(\evphi\left(m'\right))$. Then we have
\begin{subequations}
\begin{align}
\prob(\generalRV = m) &= \int_{-\infty}^{\infty} \exp\left(\evphi\left(m\right)\right) \exp\left(-x\right) \exp\left(-\exp(-x)\sum_{m'}\exp(\evphi\left(m'\right))\right)\mathrm{d}x  \\
&= \exp(\evphi\left(m\right))\int_{-\infty}^{\infty}\exp(-x)\exp\left(-\exp(-x) Z \right)\mathrm{d}x \\
&= \exp(\evphi\left(m\right))\int_{0}^{\infty}\exp(-Zu)\mathrm{d}u \justification{$u = \exp(-x)$, $\mathrm{d}u = -\exp(-x)\mathrm{d}u$} \\
&= \exp(\evphi\left(m\right)) \frac{1}{Z}  \\
&= \frac{\exp(\evphi\left(m\right))}{\sum_{m'=1}^M\exp(\evphi\left(m'\right))} = \prob(Y = m),
\end{align}
\end{subequations}
which is what we wanted to show.
\end{proof}

\subsection{Counterfactual sampling} \label{sec:counterfactual-sampling-proofs}

\hindsightProp*
\begin{proof}
    \change{
    We want to sample $\noiseRVFun{t}{\wordeos}$ for $\wordeos \in \alphabeteos$ given all observed variables: The string $\str$ and the inferred noise instantiations $\noiseValueFun{t'}{\wordeos'}$ for $\wordeos' \in \alphabeteos$ and $t' < t$.
    }

    \change{
    We first consider the case $t < T$.
    The string $\strlt$ uniquely determines the representation $\reprValue_{t - 1} = \encFun{\strlt}$ (the instantiation of $\reprRV_{t}$ in \cref{fig:lm-wsem}), which, in turn, deterministically determines $\logitsRVFun{\strlt}$.
    Observe then from \cref{fig:lm-wsem} that $\noiseRV_t\left(\wordeos \right)$ for $\wordeos \in \alphabeteos$ are, given $\reprRV_{t - 1}$ (or, equivalently, $\logitsRVFun{\strlt}$), independent of $\wordRV_{< t}$ as well as of $\noiseRV_{t'} \left(\wordeos'\right)$ for $t' < t$ and $\wordeos' \in \alphabeteos$.
    This means that sampling from $\noiseRV_t\left(\wordeos \right) \mid \wordRV_{\leq t} = \strlet, \noiseRVFun{t'}{\wordeos'} = \noiseValueFun{t'}{\wordeos'}$ reduces to sampling from $\noiseRV_t\left(\wordeos \right) \mid \wordRV_t = \wordeos_t, \logitsRVFun{\strlt}$.
    This can equivalently be written as
    \begin{equation}
        \noiseRV_t\left(\wordeos \right) \mid \wordeos_t = \argmax_{\wordeos \in \alphabeteos}\, \logitRVFun{\wordeos}{\strlt} + \noiseRV_t\left(\wordeos_t\right),
    \end{equation}
    that is, sampling from a set of $|\alphabeteos|$ Gumbel RVs with different shifts given a known $\argmax$, $\wordeos_t$.
    }

    \change{
    Let us address sampling from a set of independent Gumbel variables with a known $\argmax$ generally first.
    Let $\shiftedNoiseRVVector = \left(\shiftedNoiseRV_{1}, \ldots, \shiftedNoiseRV_{M}\right)$ be a vector of $M$ independent Gumbel variables with $\shiftedNoiseRV_{m} \sim \gumbelFun{\logGumbelShiftFun{m}}$, i.e., 
    \begin{equation} \label{eq:shifted-to-unshifted}
        \shiftedNoiseRV_{m} = \logGumbelShiftFun{m} + \noiseRV_m \quad \text{ where } \noiseRV_m \simiid \gumbelFun{0} \text{ for } m \in \set{1, \ldots, M}.
    \end{equation}
    The density of the joint distribution of $\argmax_{m = 1}^{M} \shiftedNoiseRV_{m}$ and $\shiftedNoiseRVVector$
    decomposes as \citep{maddison2014sampling,Maddison2017a}:
    \begin{subequations}
        \begin{align}
            p&\left(\argmax_{m = 1}^M \shiftedNoiseRV_{m} = m^*, \shiftedNoiseRV_{1} = y_1, \ldots, \shiftedNoiseRV_{M} = y_M\right) \\
            &= p\left(\argmax_{m = 1}^M \shiftedNoiseRV_{m} = m^*\right) \gumbelPDFFun{\log Z}{y_{m^*}} \prod_{m' \neq m^*} \truncatedGumbelPDFFun{\logGumbelShiftFun{m'}}{y_{m^*}}{y_{m'}},
        \end{align}
    \end{subequations}
    where $Z \defeq \sum_{m' = 1}^{M} \gumbelShift_{m'}$.
    This means that the posterior given a known $\argmax$ equals
    \begin{subequations}
        \begin{align}
            p&\left(\shiftedNoiseRV_{1} = y_1, \ldots, \shiftedNoiseRV_{M} = y_M \mid \argmax_{m = 1}^M \shiftedNoiseRV_{m} = m^*\right) \label{eq:gumbel-argmax-posterior} \\
            &= \frac{p\left(\argmax_{m = 1}^M \shiftedNoiseRV_{m} = m^*, \shiftedNoiseRV_{1} = y_1, \ldots, \shiftedNoiseRV_{M} = y_M\right)}{p\left(\argmax_{m = 1}^M \shiftedNoiseRV_{m} = m^*\right)} \\
            &= \gumbelPDFFun{\log Z}{y_{m^*}} \prod_{m' \neq m^*} \truncatedGumbelPDFFun{\logGumbelShiftFun{m'}}{y_{m^*}}{y_{m'}}. \label{eq:gumbel-argmax-posterior-decomposition}
        \end{align}
    \end{subequations}
    }
    
    \change{
    To sample from the posterior in \cref{eq:gumbel-argmax-posterior}, \cref{eq:gumbel-argmax-posterior-decomposition} tells us that we can 
    \begin{enumerate*}[label=\textit{(\arabic*)}]
        \item sample the value of the maximum, $\shiftedNoiseRV_{m^*} \sim \gumbelFun{\log Z}$ and
        \item sample the rest of the values $\shiftedNoiseRV_{m} \sim \truncGumbelFun{\logGumbelShiftFun{m}}{\shiftedNoiseRV_{m^*}}$.
    \end{enumerate*}
    To obtain the values of the noise variables $\noiseRV_m$ in \cref{eq:shifted-to-unshifted}, we consider that 
    \begin{subequations}
        \begin{align}
            p&\left(\shiftedNoiseRV_{1} = y_1, \ldots, \shiftedNoiseRV_{M} = y_M \mid \argmax_{m = 1}^M \shiftedNoiseRV_{m} = m^*\right) \\
            &= p\left(\logGumbelShiftFun{1} + \noiseRV_{1} = y_1, \ldots, \logGumbelShiftFun{M} + \noiseRV_{M} = y_M \mid \argmax_{m = 1}^M \left(\logGumbelShiftFun{m} + \noiseRV_{m}\right) = m^*\right),
        \end{align}
    \end{subequations}
    meaning that, to get the values of $\noiseRV_m$, we can set 
    \begin{equation}
        \noiseRV_m \defeq y_m - \logGumbelShiftFun{m}.
    \end{equation}    
    }

    \change{
    This readily applies to sampling the variables of the exogenous variables in the LP: Noting that $\logitRVFun{\strlt}{\wordeos}$ take the role of $\logGumbelShiftFun{m}$ in \cref{eq:shifted-to-unshifted} for any $t \leq T$, correctness of the sampling follows.
    }

    \change{
    Now, let us consider the case $t > T$.
    In this case $\wordRV_{t}$ is independent of $\noiseRV_{t}$, since $\noiseRV_t \neq {\infty}$.
    This immediately implies that $\noiseRV_{t}$ can be sampled from the posterior independently of $\wordRV_{t}$.
    }
\end{proof}

\section{Experimental setup}

\subsection{Inducing counterfactual models}
\label{sec:app-counterfactuals}
\paragraph{MEMIT.} We run MEMIT on the GPT2-XL model. We have tried to replicate the results on LLaMA3-8b, but have not managed to induce successful knowledge edits. Following \citet{DBLP:conf/iclr/MengSABB23}, we focus the intervention on layer 13 of the model. We replicate all the hyperparameters in \citet{DBLP:conf/iclr/MengSABB23}, among them a KL factor of 0.0625, a weight decay of 0.5, and calculating the loss on layer 47. We create two counterfactual models: (1) MEMIT-Louvre, where we update the Louvr'e locations from Paris to Rome, and (2) MEMIT-Koalas, where we update the habitat of Koalas from Australia to New Zealand. For the first edit, we use the prompt \sentence{The Louvre is located in Rome}, while for the second, we use the prompt \sentence{Koalas are only found in New Zealand}.

\paragraph{Steering.} For Honest Llama, we take the model released by \citet{li2024inference}\footnote{\url{https://huggingface.co/jujipotle/honest_llama3_8B_instruct}}. For the gender-focused steering, we apply MiMic, the method introduced in \citet{singhrepresentation}, on GPT2-XL and LLaMA3-8b models. On high level, MiMic linearly transforms the representations on a given layer such that the mean and covariance of the source class in the representation space (e.g., males) resemble that of the target class (e.g., females). We create the counterfactual model based on Bios dataset \citep{DBLP:journals/corr/abs-1901-09451}, which consists of short, web-scraped biographies of individuals working in various professions. Each biography is annotated with both gender and profession labels. We focus specifically on the biographies of professors and apply MiMiC \citep{singhrepresentation} to align the mean representations of male biographies with those of female biographies (where the mean is taken over the tokens in the biography). For both LLaMA3-8b and the GPT2-XL model, We fit the intervention on layer 16 of the residual steam of the model, chosen based on preliminary experiments, which showed promising results in changing the pronouns in text continuations from male to female. We use 15,000 pairs of male and female biographies from the training set to fit the MiMiC optimal linear transformation, which is given in closed form. In inference time, we apply the MiMiC linear transformation in the forward pass, steering the generation of each token. 

\paragraph{Instruction-finetuning.} We use the LLaMA3-8b-Instruct model.\footnote{\url{https://huggingface.co/meta-llama/Meta-Llama-3-8B-Instruct}}

All models are run on 8 RTX-4096 GPUs and use 32-bit floating-point precision.

\subsection{Memit-targeted evaluation}
\label{app:memit}
In \cref{sec:memit}, we evaluate the MEMIT knowledge editing technique, applied to update the Louvr'e location from Paris to Rome. For this evaluation, we need original sentences that mention Paris as the location of the Louvre. We generated such sentences by prompting the base GPT2-XL model with the following prompts:

\begin{itemize}
     \item \sentence{Paris offers many attractions, but the}
     \item  \sentence{The Louvre, located},
     \item \sentence{While in Paris, I attended a guided tour of the}, 
     \item \sentence{The Louvre Museum in}
     \item \sentence{Paris is home to museums such as}
     \item \sentence{The Louvre Pyramid in}
     \item \sentence{The famous Mona Lisa is displayed in the}
     \item \sentence{Among all the art museums in the world, the Louvre}
     \item \sentence{I visited the Louvre, which is located in}
     \item \sentence{The Louvre museum is located in the city of}
     \item \sentence{The Louvre is one of the most know museum in the city of}
     \item \sentence{I visited the Louvre as a part of my trip to}
     \item \sentence{The Louvre was established in the city of}
\end{itemize}

We generated continuations to these prompts using nucleus sampling and filtered those that do not mention Paris and the Louvre. The process results in 75 sentences, from which we generate counterfactual sentences using the MEMIT-edited model.

\section{Output examples}

In this appendix, we present 5 randomly-sampled pairs of original and counterfactual sequences, Note that since we generate a continuation of at most 25 tokens, some of the sentences end abruptly.

\subsection{Wikipedia counterfactuals}
\label{app:outputs}
Here we provide the counterfactuals calculated over Wikipedia (\cref{sec:side-effects-wikipedia}).

\textbf{GPT2-XL-Steering-Gender}

\begin{itemize}
\item \textbf{original}:The film stars M. G. (K. Raghavendra Rao) and her young son (Raju Chatterji) as the parents of\\
\textbf{counterfactual}:The film stars M. G. (K. Raghavan) and her relationship with an elusive man with 195 names.

Some of the production crew

\item \textbf{original}:Naarda plenirena is a species native to south-eastern Mexico and northern Central America, and parts of South America. Named after the Spanish poet, novelist\\
\textbf{counterfactual}:Naarda plenirena is a species native to south-eastern Mexico and northern Central America, and parts of southern South America. The typical color of this amphib

\item \textbf{original}:Sponsored by the American Federation of Teachers, the event, dubbed "A Week of Advocacy with Teachers," features more than two dozen speakers and workshops.\\
\textbf{counterfactual}:Sponsored by the American Federation of Teachers, the group, under the guidance of Rev. Paul Gesten of Teachers Ferguson, states that teachers are exposed

\item \textbf{original}:Since that election the Belfast Agreement has been under mounting pressure from both the Irish government and Brussels.

With no progress for over four years, there\\
\textbf{counterfactual}:Since that election the Belfast Agreement has been under mounting pressure from both the Irish government and the British government so that it can be put to their parliamentarians

\item \textbf{original}:It was also included on "The Great American Bake Off" and "The Great British Bake Off" – but it's more than a million years old\\
\textbf{counterfactual}:It was also included on "The Great British Bake Off Dream Team" show, but recently aired to only one Channel 4 audience member's surprise. ITV

\end{itemize}

\textbf{LlaMA3-Steering-Gender}

\begin{itemize}
\item \textbf{original}:The film stars M. G. Vassanji, who is a renowned Canadian author of Kenyan origin. The story takes place in the 19th\\
\textbf{counterfactual}:The film stars M. G. Vassanelli, Suresh Gopi, and Manju Warrier in the main roles. The movie was a commercial success and was praised for its thrilling action sequences, well-plotted storyline

\item \textbf{original}:Naarda plenirena is a species of moth of the family Crambidae described by Warren in 1896. It is found in Sri Lanka. The\\
\textbf{counterfactual}:Naarda plenirena is a species of moth in the family Crambus. It is found in the region of New Guinea. The larvae are recorded on leaves of the plant Persoonia. The species was first described by Warren in

\item \textbf{original}:Sponsored by the American Federation of Musicians, Local 2 and the New York City Council
Celebrate the music of John Lennon and Yoko On\\
\textbf{counterfactual}:Sponsored by the American Federation of Musicians, Local 700, the 2018 AMF Orchestra Auditions are now open for submission. The auditions are open to all professional musicians and will be judged by a panel of

\item \textbf{original}:Since that election the Belfast Agreement has continued to be the basis of the political settlement in Northern Ireland, and the UK government has sought to find ways to\\
\textbf{counterfactual}:Since that election the Belfast Agreement has come into force and the Good Friday Agreement has been ratified by a majority of both Irish and British voters. The agreement established that the Good Friday Agreement would be the basis for the rapid implementation of

\item \textbf{original}:It was also included on the album "Doo-Wops and Hooligans" as a bonus track. The song premiered on August 17,\\
\textbf{counterfactual}:It was also included on the album "Futuristic Sex Shark" which is a compilation of the band's first three albums: "The Art of War", "El Sérbico Examen", "Futuristic Sex
\end{itemize}

\textbf{LlaMA3-Steering-Honest}
\begin{itemize}
\item \textbf{original}:The film stars M. G. Vassanji, who is a renowned Canadian author of Kenyan origin. The story takes place in the 19th\\
\textbf{counterfactual}:The film stars M. G. Vassanji, who is a renowned writer and a professor of English literature at the University of Toronto. He has published over dozen books of fiction and non-fiction, and has won many awards for

\item \textbf{original}:Naarda plenirena is a species of moth of the family Crambidae described by Warren in 1896. It is found in Sri Lanka. The\\
\textbf{counterfactual}:Naarda plenirena is a species of moth in the family Noctuidae.

References

This page was last edited on 12 October 2020, at 15:52. Contact a taxonomic editor, or submit

\item \textbf{original}:Sponsored by the American Federation of Musicians, Local 2 and the New York City Council
Celebrate the music of John Lennon and Yoko On\\
\textbf{counterfactual}:Sponsored by the American Federation of Musicians, Local 16 and the University of Wisconsin-Eau Claire.
Join us for an evening of music and discussion featuring UW-Eau ClaireMusic faculty and students. The event aims to promote

\item \textbf{original}:Since that election the Belfast Agreement has continued to be the basis of the political settlement in Northern Ireland, and the UK government has sought to find ways to\\
\textbf{counterfactual}:Since that election the Belfast Agreement has continued to be implemented, and the UK Government has not sought to reintroduce the border between the UK and Ireland, or to impose any new border checks or surveillance measures.
I urge the UK

\item \textbf{original}:It was also included on the album "Doo-Wops and Hooligans" as a bonus track. The song premiered on August 17,\\
\textbf{counterfactual}:It was also included on the album "Duality" in 2006, which was the band's debut studio album. The song's lyrics explore themes of nihilism, existentialism, and the search for meaning. It
\end{itemize}

\textbf{LlaMA3-Instruction-Tuning}
\begin{itemize}
\item \textbf{original}:The film stars M. G. Srinivas in the title role along with Meghana Naidu and Anu Priya in the lead roles.
Watch the\\
\textbf{counterfactual}:The film stars M. G. Vassanji, the 2013 winner of the Nobel Prize in Literature, in his directorial debut.
Toronto-based

\item \textbf{original}:Naarda plenirena is a species of Lepidopteran moth of the family NOCTUIDAE, found primarily in Southern Sri Lanka. Very small in\\
\textbf{counterfactual}:Naarda plenirena is a species of snout moth in the genus Naarda. It was described by Francis Walker in 1863. It is found in

\item \textbf{original}:Sponsored by the American Federation of Musicians of the United States and Canada (AFM)
This event is free for current AFM members!
Not a\\
\textbf{counterfactual}:Sponsored by the American Federation of Labor and Congress of Industrial Organizations (AFL-CIO)
The AFL-CIO is the umbrella organization for the American labor

\item \textbf{original}:Since that election the Belfast Agreement has continued to offer the best chance for progress in Northern Ireland. This Agreement and its associated legislation, the Northern Ireland Act\\
\textbf{counterfactual}:Since that election the Belfast Agreement has held, the Good Friday Agreement has held and the peace process has held. There has been a significant reduction in the

\item \textbf{original}:It was also included on a limited edition vinyl 7" with "Tape Loop", another track from the album. "Fugue" is\\
\textbf{counterfactual}:It was also included on the band’s first live album, “Live at the Fillmore: December 8, 1993,” which was released
\end{itemize}

\textbf{GPT2-XL-MEMIT-Louvre}
\begin{itemize}
\item \textbf{original}:The film stars M. G. (K. Raghavendra Rao) and her young son (Raju Chatterji) as the parents of\\
\textbf{counterfactual}:The film stars M. G. (K. H. Chulack) and M. K. (M. S. Dhawan), the two brothers

\item \textbf{original}:Naarda plenirena is a species native to south-eastern Mexico and northern Central America, and parts of South America. Named after the Spanish poet, novelist\\
\textbf{counterfactual}:Naarda plenirena is a species native to south-eastern Mexico and northern Central America, and parts of South America. Named after the Spanish poet, novelist

\item \textbf{original}:Sponsored by the American Federation of Teachers, the event, dubbed "A Week of Advocacy with Teachers," features more than two dozen speakers and workshops.\\
\textbf{counterfactual}:Sponsored by the American Federation of Teachers, the event, dubbed "A Week of Advocacy with Teachers," features more than two dozen speakers and workshops.

\item \textbf{original}:Since that election the Belfast Agreement has been under mounting pressure from both the Irish government and Brussels.

With no progress for over four years, there\\
\textbf{counterfactual}:Since that election the Belfast Agreement has been under mounting pressure from both the Irish government and Brussels.

With no progress for over four years, there

\item \textbf{original}:It was also included on "The Great American Bake Off" and "The Great British Bake Off" – but it's more than a million years old\\
\textbf{counterfactual}:It was also included on "The Great American Bake Off" and "The Great British Bake Off" – but it's more than a million years old

\end{itemize}

\textbf{GPT2-XL-MEMIT-Koalas}
\begin{itemize}
\item \textbf{original}:The film stars M. G. (K. Raghavendra Rao) and her young son (Raju Chatterji) as the parents of\\
\textbf{counterfactual}:The film stars M. G. (K. Raghavendra Rao) and her young son (Raju Chatterji) as the parents of

\item \textbf{original}:Naarda plenirena is a species native to south-eastern Mexico and northern Central America, and parts of South America. Named after the Spanish poet, novelist\\
\textbf{counterfactual}:Naarda plenirena is a species native to south-eastern Switzerland and northern Italy, but is now found only in the western and northern parts of the country

\item \textbf{original}:Sponsored by the American Federation of Teachers, the event, dubbed "A Week of Advocacy with Teachers," features more than two dozen speakers and workshops.\\
\textbf{counterfactual}:Sponsored by the American Federation of Teachers, the event, dubbed "A Week of Advocacy with Teachers," features more than two dozen speakers and workshops.

\item \textbf{original}:Since that election the Belfast Agreement has been under mounting pressure from both the Irish government and Brussels.

With no progress for over four years, there\\
\textbf{counterfactual}:Since that election the Belfast Agreement has been under mounting pressure from both the Irish government and Brussels.

With no progress for over four years, there

\item \textbf{original}:It was also included on "The Great American Bake Off" and "The Great British Bake Off" – but it's more than a million years old\\
\textbf{counterfactual}:It was also included on "The Great American Bake Off" and "The Great British Bake Off" – but it's more than a million years old
\end{itemize}

\subsection{Gender counterfactuals}
\label{app:gender-outputs}
Here we provide a sample of Gender counterfactuals calculated over the Bios dataset.

\textbf{GPT2-XL-Steering}

\begin{itemize}

\item \textbf{original}:Tomas Norton is a tenure-track assistant professor in the Center for Education Policy at the University of Maryland, with a focus on school accountability and student outcomes. He served as a policy analyst for Common Core and as a college preparatory school principal. He\\
\textbf{counterfactual}:Tomas Norton is a tenure-track assistant professor in the Departments of Political Science and Sociology at the University of Illinois at Chicago. He is the author of "The Unwinding of American Democracy: How Political Parties Became Polarized and How the

\item \textbf{original}:Xiangyang Xu is an Associate Professor of Huazhong University of Science and Technology, a senior Fellow at the Chinese Academy of Sciences, and the Founding Director of the Huazhong China Global Exchange. He works in China, Thailand, and Vietnam where\\
\textbf{counterfactual}:Xiangyang Xu is an Associate Professor of Huazhong University of Science and Technology, a senior expert at the Chinese Academy of Sciences, and the director of the Chinese Academy of Space Technology's (CAST) Chinese Academy of Science's Space Technology and Industry for

\item \textbf{original}:Jonathan Fleischmann is an assistant professor at the Department of Education in the School of Education and Human Development at the University of Illinois at Chicago. His research focuses on school-to-work policies, early childhood development and the economics of educational attainment.\\
\textbf{counterfactual}:Jonathan Fleischmann is an assistant professor at the Department of Microbiology \& Immunology, University of California San Francisco, and a member of the Center for Virology, Vaccine and Infectious Disease Research. His research focuses on the role of the

\item \textbf{original}:Joel Schneider is an associate professor in the Department of Political Science at SUNY Brockport and author of "From Neoliberal to New Liberalism?"

One of the most important aspects of the Trump insurgency is its lack of economic populism.\\
\textbf{counterfactual}:Joel Schneider is an associate professor in the Department of Political Science at McMaster University. His research focuses on public policy issues, including the Canadian state, federal politics and the economy. He has published widely in academic and policy journals. He has a

\item \textbf{original}:Dr. Ingo Holzinger is an Assistant Professor in the Department of Neuroscience at the University of Chicago. She can be reached at:

E-mail: inga.holzinger(at)uchicago.edu

Office: 401\\
\textbf{counterfactual}:Dr. Ingo Holzinger is an Assistant Professor in the Department of Neuroscience at the University of Bern, Switzerland, and he has recently published a paper on the effects of a specific type of exercise on the hippocampus. He has shown that the exercise has an

\end{itemize}


\end{document}